\pdfoutput=1
\documentclass[letterpaper]{article} 
\usepackage[submission]{aaai25}  

\usepackage{times}  
\usepackage{helvet}  
\usepackage{courier}  
\usepackage[hyphens]{url}  
\usepackage{graphicx} 
\urlstyle{rm} 
\usepackage{caption} 
\frenchspacing  
\setlength{\pdfpagewidth}{8.5in} 
\setlength{\pdfpageheight}{11in} 
%

\usepackage{soul}
\usepackage{amsmath}
\usepackage{amsfonts}
\usepackage{amsthm}
\usepackage{amssymb}
\usepackage{booktabs}

\usepackage{algorithm}
\usepackage{algorithmicx}
\usepackage[noend]{algpseudocode}
\usepackage{comment}
\usepackage{todonotes}

\usepackage{color, colortbl, etoolbox}
\usepackage{tabulary, array}

\usepackage{comment}

\usepackage{cancel}

\usepackage{array,multirow,makecell}


\hyphenation{pa-ra-me-tric}
\hyphenation{effi-ciency}
\hyphenation{corres-ponds}
\hyphenation{pre-di-cates}
\hyphenation{bet-ween}
\hyphenation{pro-ba-bi-li-ties}
\hyphenation{gathe-ring}
\hyphenation{atta-ching}
\hyphenation{ex-ten-sions}

\makeatletter
\patchcmd{\Gin@setfile}
{\ProvidesFile}
{\ProvidesFile}
{}{}
\makeatother


\urlstyle{same}


\newtheorem{example}{Example}

\newtheorem{notation}{Notation}
\newtheorem{definition}{Definition}

\newtheorem{proposition}{Proposition}

\newcommand{\nci}{| \! \! \! \sim}


\newcommand{\cW}{{\mathcal{W}}}

\newcommand{\bL}{{\mathbf{L}}} 
\newcommand{\bM}{{\mathbf{M}}}

\newcommand{\la}{\langle}
\newcommand{\ra}{\rangle}

\newcommand{\lb}{\lbrace}
\newcommand{\rb}{\rbrace}
\newcommand{\m}{\mathtt}

\newcommand{\abso}[1]{\m{abs}(#1)}
\newcommand{\tuple}[1]{\langle #1 \rangle}

\newcommand{\quality}{\mathtt{Q}}

\newcommand{\fCN}[1]{\mathtt{fCn}_{#1}}

\newcommand{\card}[1]{{|#1|}}

\newcommand{\wLog}{\m{wLog}}

\newcommand{\qual}{{\m{Q}}}

\newcommand{\Arg}{{\sf Arg}}

\newcommand{\aArg}{{\sf aArg}}

\newcommand{\NNF}{{\mathtt{NNF}}}
\newcommand{\lit}{{\mathtt{Lit}}}

\newcommand{\Flat}{\mathtt{Flat}}
\newcommand{\Wei}{\mathtt{Weight}}

\newcommand{\E}{C}
\newcommand{\F}{F}
\newcommand{\len}{\mathtt{Card}}
\newcommand{\dnf}{\mathtt{Dnf}}
\newcommand{\lan}{\mathtt{Lan}}
\newcommand{\cnf}{\mathtt{Cnf}}
\newcommand{\fa}{\forall \:}
\newcommand{\ex}{\exists \:}
\newcommand{\ato}{\mathtt{A}}
\newcommand{\enua}{\mathtt{Ena}}

\newcommand{\enui}{\mathtt{Eni}}
\newcommand{\ei}{\mathtt{I}}

\newcommand{\fo}{\mathtt{f}}

\newcommand{\nf}{\mathtt{Nf}}
\newcommand{\dn}{\mathtt{Dn}}
\newcommand{\Inc}{\mathtt{Inc}}
\newcommand{\wLan}{\mathtt{wLan}}

\newcommand{\psco}[1]{{\mathtt{M}^\mathtt{psc}_{#1}}}
\newcommand{\pwco}[1]{{\mathtt{M}^\mathtt{pwc}_{#1}}}
\newcommand{\dsco}{{\mathtt{M}^\mathtt{dsc}}}
\newcommand{\dwco}{{\mathtt{M}^\mathtt{dwc}}}

\newcommand{\ppi}[1]{{\mathtt{M}_{#1}^\mathtt{ppi}}}
\newcommand{\dpi}[1]{{\mathtt{M}_{#1}^\mathtt{dpi}}}

\newcommand{\cinf}{\mathtt{Inf}}
\newcommand{\fdn}{\mathtt{fDn}}
\newcommand{\cmmin}[1]{{\mathtt{M}_{#1}^\mathtt{dm}}}
\newcommand{\cmpen}[1]{{\mathtt{M}_{#1}^\mathtt{pm}}}
\newcommand{\mymin}{\mathtt{Min}}
\newcommand{\mymax}{\mathtt{Max}}
\newcommand{\cmtve}[1]{{\mathtt{M}_{#1}^\mathtt{tv}}}
\newcommand{\cmtvetve}[1]{{\mathtt{M}_{#1}^\mathtt{tp}}}
\newcommand{\cmbl}[1]{{\mathtt{M}_{#1}^\mathtt{bp}}}
\newcommand{\cmcd}[1]{{\mathtt{M}_{#1}^\mathtt{cd}}}
\newcommand{\cmcp}[1]{{\mathtt{M}_{#1}^\mathtt{cp}}}
\newcommand{\cmdg}[1]{{\mathtt{M}_{#1}^\mathtt{dg}}}
\newcommand{\cmpg}[1]{{\mathtt{M}_{#1}^\mathtt{pg}}}
\newcommand{\cmsd}[1]{{\mathtt{M}_{#1}^\mathtt{sd}}}
\newcommand{\cmld}[1]{{\mathtt{M}_{#1}^\mathtt{ld}}}

\newcommand{\ent}{\mathtt{Enth}}




\definecolor{darkgreen}{rgb}{0.45, 0.75,0.5}
\definecolor{lightblue}{rgb}{0.15, 0.48, 0.75}

\pdfinfo{
/TemplateVersion (2025.1)
}

\setcounter{secnumdepth}{0}

\title{An Axiomatic Study of the Evaluation of Enthymeme Decoding in\\ Weighted Structured Argumentation}

\author {
    Jonathan Ben-Naim\textsuperscript{\rm 1}, 
    Victor David\textsuperscript{\rm 2}\footnote{corresponding author}, 
    Anthony Hunter\textsuperscript{\rm 3}
}
\affiliations {
    \textsuperscript{\rm 1}Université Paul Sabatier, CNRS, IRIT, France\\
    \textsuperscript{\rm 2}Université Côte d'Azur, INRIA, CNRS, I3S, France\\
    \textsuperscript{\rm 3}University College London, UK\\
    jonathan.ben-naim@irit.fr,
    victor.david@inria.fr,
    anthony.hunter@ucl.ac.uk 
}


\begin{document}




\maketitle

    
\begin{abstract}




\vspace{-0.1cm}

An argument can be seen as a pair consisting of a set of premises and a claim supported by them. Arguments used by humans are often enthymemes, i.e., some premises are implicit.
To better understand, evaluate, and compare enthymemes, it is essential to decode them, i.e., to find the missing premisses. 
Many enthymeme decodings are possible.
We need to distinguish between reasonable decodings and unreasonable ones. 
However, there is currently no research in the literature on ``{\it How to evaluate decodings?}". 
To pave the way and achieve this goal,  
we introduce seven criteria related to decoding, based on different research areas.
Then, we introduce the notion of {\it criterion measure}, the objective of which is to evaluate a decoding with regard to a certain criterion. 
Since such measures need to be validated, we introduce several desirable properties for them, called {\it axioms}.
Another main contribution of the paper is the construction of certain criterion measures that are validated by our axioms.
Such measures can be used to identify the best enthymemes decodings.

\end{abstract}

\vspace{-0.4cm}
\section{Introduction}



In the literature on logic-based argumentation, a deductive argument is usually defined as a premise-claim pair where the claim is inferred (according to a logic) from the premises. 
However, when studying human debates (i.e. real world argumentation), it is common to find incomplete arguments, called enthymemes, for which the premises are insufficient for implying the claim. 
The reason for this incompleteness is varied, for example it may result from imprecision or error, e.g. a human may argue without knowing all the necessary information, or it may be intentional, e.g. one may presuppose that some information is commonly known and therefore does not need to be stated, or the employment of enthymemes is an instrument well known since Aristotle \cite{faure2010rhetoric} as one of the most effective in rhetoric and persuasion when it comes to interacting with an audience.


There are studies in the literature on understanding enthymemes in argumentation, using natural language processing 
\cite{habernal2017argument,singh2022irac,wei2022implicit}, 
but these do not identify logic-based arguments. 
There are also symbolic approaches for decoding enthymemes in structured argumentation including
\cite{hunter2007real,Dupin2011,black2012relevance,Hosseini2014,Xydis2020,Panisson2022,hunter2022understanding,Leiva2023,ben2024understanding},
but they only consider the task as identifying a set of formulae that could be added to the incomplete premises in order to entail the claim. This offers potentially many decodings, and there is currently a lack of means for comparing these decoding candidates.



In real-world argumentation, it is important to note that decoding is more general than that of completion. In fact, when we decode, we may add and subtract information, to obtain the most appropriate decoding.
Furthermore, given that several decodings of an enthymeme can be proposed, we then have the question of how to ``how to evaluate the quality of a candidate for decoding an enthymeme" in order to make an optimal choice of decoding.

Let us take the following example (which will be part of our running example) to illustrate an enthymeme with two possible decodings. 

\begin{itemize}

\item Enthymeme $E$: Knowing that Bob is wealthy, he is a researcher, he makes people happy, and he has people around him who seem to love him, then Bob is happy.
\vspace{0.2em}

\item Decoding $D_1$: Bob is a researcher and researchers are generally happy, so Bob is happy.
\vspace{0.2em}

\item Decoding $D_2$: Bob makes people happy and is surrounded by people who love him, and because giving and receiving love often makes people happy, Bob is happy.


\end{itemize}


To study whether $D_1$ or $D_2$ is a better decoding for $E$, we will represent knowledge by weighted logics, then we will propose quality measures based on measuring different aspects of a candidate for decoding (criterion measures). Given that the number of criterion measures for a criterion is infinite, we adopt an axiomatic approach, defining the constraints of a good measure.

\section{Weighted Logics}\label{sec:logic}

In the present section, we introduce the logic in which we represent enthymeme. Let us begin with the language. We chose a weighted one, because weights play an important role in enthymeme decodings as we will see it in the section devoted to the axioms.


\begin{definition}
    \upshape{
    A \textbf{weighted language} is a set $\cW$ such that:
    \begin{itemize}
        \item every element of $\cW$ is a pair of the form $\alpha = \tuple{f,w}$ such that $f$ is a \textit{formula} and $w$ a \textit{weight} in $[0,1]$;
        \item if $\tuple{f,w} \in \cW$, then, $\forall~v \in [0,1]$, $\tuple{f,v} \in \cW$;
        \item $\forall~w \in [0,1]$, $\tuple{\bot,w} \in \cW$ ($\bot$ means \textit{contradiction}).
    \end{itemize}

    

    
   }
\end{definition}



In this paper, we interpret the weights as confidence scores, i.e. a value representing confidence in the reliability of the formula. 
Thanks to the knowledge graph community, it is possible to obtain formulae in this weighted structure with a confidence score. 
Some graphs already have this kind of formulae \cite{chen2019embedding,david2023neomapy}, 
but it is interesting to note that there are also methods for learning them, such as AMIE+ \cite{galarraga2015fast}, RLvLR \cite{omran2019embedding}, or the reinforcement learning system guided by a value function \cite{chen2022rule}.

We are ready to introduce the notion of {\it weighted logic}.

\begin{definition} \label{def:wlogic}
    \upshape{
    A \textbf{weighted logic} is a triple $\bL {= \tuple{\cW,{\nci},t}}$ s.t.:
    \begin{itemize}
        \item $\cW$ is a weighted language;
        \item ${\nci}$ is a {\bf weighted consequence relation} on $\cW$, i.e., a relation from $2^{\cW}$ to $\cW$;
        \item $t$ is a \textit{consistency threshold} belonging to $[0,1]$.
    \end{itemize}

    
    We say that $\Gamma \subseteq \cW$ is \textbf{inconsistent} on $\bL$ iff there exists $w \geq t$ s.t. $\Gamma ~{\nci}~ \tuple{\bot,w}$, and it is denoted by $\Inc_\bL$ the set of all inconsistent set of formulae in $\bL$, and when $\bL$ is clear we will use only $\Inc$. 
    Otherwise, $\Gamma$ is said to be consistent.
    }
    
\end{definition}

Next, our goal is to present an instance of weighted logic that will be used in examples.

As a preliminary, we need two operators that extract the flat formulae or the weights from weighted formulae.

\begin{definition}
    \upshape{
    Let $\cW$ be a weighted language and $\Gamma \subseteq \cW$. We denote by $\Flat(\Gamma)$ the set of every \textbf{flat formula} appearing in $\Gamma$, i.e., $\Flat(\Gamma) = \{f : \exists~w, \tuple{f,w} \in \Gamma\}$. 

    We denote by $\Wei(\Gamma)$ the set of every \textbf{weight} appearing in $\Gamma$, i.e., $\Wei(\Gamma) = \{w : \exists~f, \tuple{f,w} \in \Gamma\}$.  

    
    }
\end{definition}

In the rest of the article, for any function taking a set of weighted formulae as a parameter, we will simplify the notation for the case of a single formula, e.g., for $\alpha \in \cW$, instead of writing $\Flat(\{\alpha\})$ we will simply write $\Flat(\alpha)$.


As another preliminary, we recall the notion of classical propositional language.

\begin{definition}
    \upshape{
    We denote by $\m{Lan}$ the set of every \textbf{classical propositional formula} built up from a given non-empty finite set of atomic formulae, denoted by $\m{A}$, and the usual connectives $\neg$, $\vee$, $\wedge$, $\rightarrow$, and $\leftrightarrow$.   
    A \textbf{literal} is either an element of $\m{A}$ or the negation of it, we denote the set of all literal by $\m{L}$. For any flat formula $f \in \m{Lan}$ we denote by $\lit(f)$ the set of literals occurring in $f$, and $\forall F \subseteq \m{Lan}$, $\lit(F) = \{l : l \in \lit(f) \text{ and } f \in F\}$.
    }
    \end{definition}

We are ready to introduce our specific weighted logic that we will be used in examples.

\begin{definition}
    \upshape{
    We denote by $\wLan$ the {\bf weighted propositional language}, i.e., $\wLan$ is the set of every pair $\tuple{f,w}$ such that $f$ in $\lan$ and $w \in [0,1]$. 
    
    We denote by $\wLog$ the \textbf{weighted propositional logic}, i.e., $\wLog$ is the triple $\tuple{\cW,{\nci},t}$ s.t. the following holds:
    \begin{itemize}
        \item $\cW = \wLan$;
        \item $\fa \Gamma \subseteq \wLan$, $\fa \alpha = \tuple{f,w} \in \wLan$,
        $\Gamma ~{\nci}~ \alpha$ iff $\big( f$ is a tautology and $w = 1 \big)$
        or $\big( f$ is not a tautology, $f$ classically follows from $\Flat(\Gamma)$, i.e. $\Flat(\Gamma) \vdash f$, and $w = \m{min}[\Wei(\Gamma)] \big)$;
        \item $t = 0.5$.
    \end{itemize}
    }

\end{definition}

Following examples 1 and 2 illustrate this definition. 
From now on, whenever we work with a weighted logic $\bL$, the typical instance we have in mind is $\wLog$.

\subsection{Normalization Methods}

Later in the paper, we count the number of elements in a set of formulae $\Gamma$. Thus, we need first to normalize the syntactic form of $\Gamma$. To achieve this goal, we propose the notion of {\it normalization method}. 

\begin{definition}
\upshape{
Let $\cW$ be a weighted language.
A {\bf normalization method} on $\cW$ is a function $N$ that normalizes the syntactic form of the formulae, i.e., $N$ is a function from $2^\cW$ to $2^\cW$.
}
\end{definition}

The rest of the present section is devoted to the construction of a specific normalization method on $\wLog$ that will be used in examples.

Our proposal is an alternative to the notion of {\it compilation} introduced in \cite{amgoud2021similarity} for propositional logic-based arguments.

For this, we need to capture classical interpretation with formula. 

\begin{definition}
\upshape{
We assume an enumeration (without repetition) $\enua = \la a_1, a_2, \ldots, a_n \ra$ of $\ato$, as well as an enumeration $\enui = \la \ei_1, \ei_2, \ldots, \ei_\m{m} \ra$ of the classical interpretations of $\lan$.

Next, let $i \in \lb 1, 2, \ldots, \m{m} \rb$. We denote by $\fo_i$ the formula {\bf representing} the interpretation $\ei_i$, i.e., $\fo_i$ is the conjunction of literals $l_1 \wedge l_2 \wedge \cdots \wedge l_r$ of $\lan$ such that $r = n$ and $\fa j \in \lb 1, \ldots, n \rb$, the following holds: $l_j = a_j$, if $a_j$ is true in $\ei_i$; $l_j = \neg a_j$, otherwise.
}
\end{definition}

We are ready to normalize the syntactic form of a propositional formula in a standard way.

\begin{definition}
\upshape{
Let $f \in \lan$. 
We denote by $\dnf(f)$ the {\bf canonical disjunctive normal form} of $f$, i.e.,
\begin{center}
   $\dnf(f) = \bigvee_{\lb i : \ei_i \textrm{ is a model of } f \rb} \fo_i.$
\end{center}

Next, we denote by $\cnf(f)$ the {\bf canonical conjunctive normal form} of $f$, i.e., $\cnf(f)$ is obtained from $\neg \dnf(\neg f)$ by, first, applying the De Morgan laws and double negation until we get a formula in CNF, and second iteratively applying the following three points:
\begin{enumerate}
    \item identify any two clauses $c = l_1 \vee l_2 \vee \cdots \vee l_n$ and $c' = l'_1 \vee l'_2 \vee \cdots \vee l'_m$ such that $n = m$ and, for some $i \in \lb 1, \ldots, n \rb$, for some $j \in \lb 1, \ldots, n \rb$, we have that
    $\big( l_i = \neg l'_j$ or $l'_j = \neg l_i \big)$ and
    $\la l_1, \ldots, l_{i-1}, l_{i+1}, \ldots, l_n\ra$ is a permutation of $\la l'_1, \ldots, l'_{j-1}, l'_{j+1}, \ldots, l'_n \ra$;
    \item remove $c'$ (unless $c'$ is a literal);
    \item remove $l_i$ from $c$ (unless $c$ is a literal).
\end{enumerate}
}
\end{definition}

Let us illustrate syntactic normalization.

\begin{example}
\upshape{
Assume that $\ato = \lb p, q, r \rb$.
Then, $\dnf(\neg p) = (\neg p \wedge q \wedge r ) \vee
(\neg p \wedge q \wedge \neg r) \vee
(\neg p \wedge \neg q \wedge r) \vee
(\neg p\wedge \neg q  \wedge \neg r)$.\\
Thus $\neg \dnf(\neg p) = \neg \big( (\neg p \wedge q \wedge r ) \vee
(\neg p \wedge q \wedge \neg r) \vee
(\neg p \wedge \neg q \wedge r) \vee
(\neg p\wedge \neg q  \wedge \neg r) \big)$.
Next, by applying De Morgan laws and double negation, we obtain the following formula : $(p \vee \neg q \vee \neg r ) \wedge
(p \vee \neg q \vee r) \wedge
(p \vee q \vee \neg r) \wedge
(p\vee q  \vee r)$.
By spotting-removing clauses twice, we get $(p \vee \neg q ) \wedge
(p \vee q)$.
By iterating the spotting-removing procedure, we get $p$.
}
\end{example}

We are ready to show how a weighted set of formulae is normalized.

\begin{definition}
\upshape{
Let $f \subseteq \m{Lan}$.
We denote by $\fdn(f)$ the {\bf flat decomposition} of $f$, i.e.,
$\fdn(f)$ is the set of every clause appearing in $\cnf(f)$.

Next, we denote by $\dn$ the normalization method on $\wLan$ called the {\bf Weighted Decomposer}, i.e., $\fa \Gamma \subseteq \wLan$,
$$\dn(\Gamma) = \lb \la c, w \ra : \ex \alpha = \la f, v \ra \in \Gamma, c \in \fdn(f) \textrm{ and } w = v \rb.$$
}
\end{definition}

Let us illustrate our normalization method, $\dn$.

\begin{example}
\upshape{
    The CNF of $f = \neg(p \rightarrow q \vee \neg r) \in \lan$ is $\cnf(f) = p \wedge \neg q \wedge r$. 
    The decomposed normal form of $f$, is $\fdn(f) = 
    \{p, \neg q,  r\}$.
    Similarly, for $\alpha = \tuple{\neg(p \rightarrow q \vee \neg r), 0.6} \in \wLan$, its normalization is given by $\dn(\alpha) = \{ \tuple{p, 0.6}, \tuple{\neg q,0.6}, \tuple{r, 0.6} \}$.
    }
\end{example}

For the rest of the paper, whenever we work with a normalization method $N$ on a weighted language $\cW$, the typical instance we have in mind is $\dn$ on $\wLan$.

\section{Weighted Structured Argumentation}
\label{sec:WLA}

An argument can be seen as a pair consisting of a set of premises and a claim supported by them.
Some constraints on the premises and claim are usually considered \cite{BH01}. 
The goal of this section is to extend the notion of argument to a weighted logic.


\begin{definition} \label{def:argument} 
\upshape{
Let $\bL = \tuple{\cW, {\nci},t}$ be a weighted logic.\linebreak
    A \textbf{weighted argument} on $\bL$ is a pair $A = \tuple{\Gamma,\alpha}$ such that  
        $\Gamma$ is a finite subset of $\cW$ and $\alpha \in \cW$, 
        $\Gamma$ is consistent, 
        $\Gamma ~{\nci}~ \alpha$, 
        $\forall~\Gamma' \subset \Gamma$, $\Gamma' {\not\nci} \alpha$. 
    Let $\Arg_\bL$ be the set of all weighted arguments on $\bL$. 
    We omit subscripts like {\scriptsize $\bL$} whenever they are clear from the context.
}

\end{definition}


\begin{figure*}[t!]
    \includegraphics[width = \textwidth]{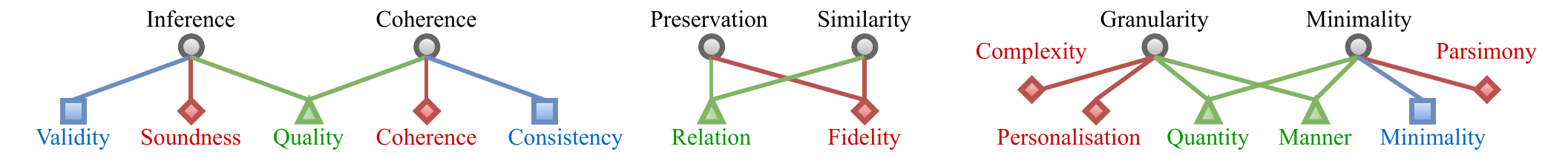}
    \caption{
    Criteria from argumentation 
    ({\color{lightblue}$\Box$}), XAI (
    {\color{red}{\LARGE$\diamond$}}), philosophy (
    {\color{darkgreen}$\bigtriangleup$}) which have inspired our decoding criteria (
    \textbf{{\footnotesize$\bigcirc$}}).}
    \label{fig:axioms}
    \vspace{-0.3cm}
\end{figure*}

However, such ideal arguments, whether weighted or not, are rarely seen. In general, humans use enthymemes, i.e., incomplete arguments in which part of the premises is missing, to logically infer the claim. The task of handling enthymemes is investigated in e.g.  \cite{hunter2007real,hunter2022understanding}.

In what follows, we introduce the notion of an approximate weighted argument, which is subject to no constraints other than the structure of its premises/claims. 
Thus, an enthymeme is a special case of this type of argument, where it is guaranteed that the inference between the premises and the claim does not logically hold.



\begin{definition}
\upshape{
Let $\bL = \tuple{\cW,{\nci},t}$ be a weighted logic.\linebreak
An {\bf approximate weighted argument} on $\bL$ is a pair $A = \tuple{\Gamma, \alpha}$ such that $\Gamma$ is a finite subset of $\cW$ and $\alpha \in \cW$. 
We denote by $\aArg_\bL$ the set of all approximate weighted arguments on $\bL$. 
An {\bf enthymeme} on $\bL$ is an element $E = \tuple{\Gamma, \alpha} \in \aArg_\bL$ such that $\Gamma {\not\nci} \alpha$.
We denote by $\ent_\bL$ the set of all enthymemes on $\bL$.
}
\end{definition}


Let us formalise and extend the running example from the introduction.

\begin{example}\label{ex:run}
\upshape{
Assuming that: $h$ = Bob is happy, $w$ = Bob is wealthy, $r$ = Bob is a researcher, $p$ = Bob gives love to people, $l$ = Bob receives love.  
Then, 

\begin{itemize}
    \item $E = \tuple{\{\tuple{w,0.7}, \tuple{r,0.7}, \tuple{p,0.8}, \tuple{l,0.9}\}, \tuple{h,0.7}}$;
    \item $D_1 = \tuple{\{\tuple{r,0.7}, \tuple{\neg r \vee h, 0.8}\},\tuple{h,0.7}}$;
    \item $D_2 = \tuple{\{\tuple{p,0.8}, \tuple{l,0.9}, \tuple{\neg p \vee \neg l \vee h, 0.9}\},\tuple{h,0.7}}$; 
    \item $D_3 = \tuple{\{\tuple{\neg r,0.7}, \tuple{w,0.7}, \tuple{\neg w \vee h,0.8} \}, \tuple{h,0.7}}$.
\end{itemize}
Where $E, D_2 \in \ent$ are enthymemes, while $D_1 \in \Arg$ is a weighted argument, and $D_3 \in \aArg$ is just an approximate weighted argument (i.e., $D_3$ is not an enthymeme). Moreover, $E,D_1,D_2,$ and $D_3$ are all normalized by $\dn$.

}
\end{example}

We are now ready to formally introduce the notion of enthymeme decoding, which, given an enthymeme and an approximate weighted argument (a decoding), returns how well it explains the potential argument underlying the enthymeme.
Note that we define a decoding without any constraints, which is justified by the fact that in real cases, we may need to evaluate imperfect decodings. In particular, if the decodings are proposed by humans or if we are automatically searching for additional information to explain the implicit, this information may be approximately coherent (e.g., in decoding $D_2$, the weight of the inference from the premises is not exactly aligned with the weight of the claim, with a difference of 0.1). We aim to evaluate any possible decoding; our evaluation criteria are specifically there to quantify the quality of the decoding.

\begin{definition}
\upshape{
$\bL = \tuple{\cW,{\nci},t}$ be a weighted logic. 
An {\bf enthymeme decoding} on $\bL$ is a pair $\la E, D \ra \in \ent \times \aArg$. Intuitively, $D$ is a decoding of the enthymeme $E$. 
}
\end{definition}

\section{Criterion Measures and Axioms}

Obviously, certain enthymeme decodings are not reasonable. 
By reasonable, we mean that there are a range possible features we would expect to see satisfied in an acceptable enthymeme decoding. 
In order to distinguish between the reasonable ones and the others, we introduce seven criteria, as well as the notion of criterion measure.



\begin{definition}
    \upshape{
    Let $\bL = \tuple{\cW,{\nci},t}$ be a weighted logic.\linebreak
    A {\bf criterion measure} on $\bL$ is a { measure of the success} of an enthymeme decoding with regard to one criterion, i.e., it is a function $\bM: \ent \times \aArg \rightarrow [0, 1]$. 
    
    
    }

\end{definition}

We propose 7 criteria for evaluating enthymeme decodings:  
the \textit{inference} of the claim from the premises, the \textit{coherence} of the premises, their \textit{minimality}, the \textit{preservation} of the enthymeme premises, the \textit{similarity} between the enthymeme premises and the decoded ones, the \textit{granularity} of the decoded premises, and the \textit{stability} of the weights.
All these criteria except stability (which is specific to our framework), are inspired by criteria defined in argumentation \cite{SL92}, or informally discussed in explainable AI (XAI) \cite{sokol2020explainability} or in philosophy \cite{grice1975logic}, as elucidated in Figure~\ref{fig:axioms}.
It is useful also to recall that the notions of argument and explanation are close \cite{hahn2023argument}, and that XAI's informal properties are originally based on social science research, to make algorithmic explanations more natural for users; which in the case of enthymeme decoding (context- and user-dependent), 
is very relevant.


For each criterion $Z$, we establish one or several axioms that a measure centered on $Z$ should satisfy.

\vspace{0.2cm}

{\it Axioms about the inference criterion.} They force a measure to consider a decoding as reasonable if the decoded premises infers the claim (Ideal version), or the more the premises fully infer the claim, the better the decoding (Increasing version). 



\begin{definition}
\upshape{
We denote by $\card{X}$ the cardinality of $X$.

Let $\bL = \tuple{\cW,{\nci},t}$ be a weighted logic and $\bM$ a criterion measure on $\bL$.
We say that $\bM$ satisfies the axioms \textbf{Ideal Flat Inference}, and \textbf{Ideal Weighted Inference} iff $\fa E \in \ent$, $\fa D= \tuple{\Delta,\beta} \in \aArg$, the following first, and second point holds, respectively:

\begin{itemize}
    \item if $\Flat(\Delta) \vdash \Flat(\beta)$, then $\bM(E,D) = 1$;
    \vspace{0.4em} 
    \item if $\Delta ~{\nci}~ \beta$, then $\bM(E,D) = 1$.
\end{itemize}

We say that $\bM$ satisfies the axiom \textbf{Lenient Increasing Flat Inference} iff, $\fa E \in \ent$, $\fa D = \tuple{\Delta,\beta}, D' = \tuple{\Delta',\beta} \in \aArg$, the following holds:
\begin{flalign*}
    & \text{if } \card{\{ f : \Flat(\Delta) \vdash f \text{ and } \Flat(\beta) \vdash f \}} \geq & \\
    & \card{\{ f : \Flat(\Delta') \vdash f \text{ and } \Flat(\beta) \vdash f \}}, & \\
    & \text{then } \bM(E,D) \geq \bM(E,D'). &
\end{flalign*}

The axiom \textbf{Strict Increasing Flat Inference} is defined as above, but $\geq$ is replaced by $>$.

We say that $\bM$ satisfies the axioms \textbf{Lenient Increasing Weighted Inference} iff, $\fa E \in \ent$, $\fa D = \tuple{\Delta,\beta}, D' = \tuple{\Delta',\beta} \in \aArg$, the following holds:
\begin{flalign*}
    & \text{if } \card{\{ \alpha : \Delta ~{\nci}~ \alpha \text{ and } \beta ~{\nci}~ \alpha \}} \geq  \card{\{ \alpha : \Delta' ~{\nci}~ \alpha \text{ and } \beta ~{\nci}~ \alpha \}}, & \\
    & \text{then } \bM(E,D) \geq \bM(E,D'). &
\end{flalign*}

The axiom \textbf{Strict Increasing Weighted Inference} is defined as above, but $\geq$ is replaced by $>$.
 
}
\end{definition}

{\it Axioms of minimality.} Decoding must be sufficiently selective to avoid overwhelming the user with data (Ideal version); the more information the premises contain that is not necessary to infer the claim, the worse the decoding (Decreasing version).
Note that if the premises do not imply the claim, then any information is potentially required to infer the claim, thus minimality is not weakened.

\begin{definition}\label{def:axMin}
\upshape{
Let $\bL = \tuple{\cW,{\nci},t}$ be a weighted logic and $\bM$ a criterion measure on $\bL$. 
We say that $\bM$ satisfies the axioms \textbf{Ideal Flat Minimality}, and \textbf{Ideal Weighted Minimality} iff 
$\forall~E = \tuple{\Gamma,\alpha} \in \ent, \fa D= \tuple{\Delta,\beta} \in \aArg$, the following first, and second point holds, respectively: 
\begin{itemize}
    \item if $\fa \Delta' \subset \Delta, \Flat(\Delta') \not{\vdash} ~ \Flat(\beta)$, then $\bM(E,D) = 1$;
    \vspace{0.25mm} 
    \vspace{-0.2cm}
    \item if $\fa \Delta' \subset \Delta, \Delta' {\not\nci} ~ \beta$, then $\bM(E,D) = 1$.
\end{itemize}


We say that $\bM$ satisfies the axioms \textbf{Lenient Decreasing Flat Minimality}, and \textbf{Lenient Decreasing Weighted Minimality} iff, $\fa E \in \ent$, $\fa D = \tuple{\Delta,\beta}, D' = \tuple{\Delta',\beta} \in \aArg$, the following first, and second point holds, respectively:
\begin{itemize}
    \item if $\card{\{\Gamma : \Gamma \subset \Delta \text{ s.t. } \Flat(\Gamma) ~{\vdash}~  \Flat(\beta)\}} \geq $ 
    \vspace{0.3em} \\
    \vspace{0.3em} 
    $\card{\{\Gamma : \Gamma' \subset \Delta' \text{ s.t. } \Flat(\Gamma) ~{\vdash}~  \Flat(\beta) \}}$, \\
    \vspace{0.5em} 
    then $\bM(E,D) \leq \bM(E,D')$;
    \vspace{0.1em} 
    
    \item if $\card{\{\Gamma : \Gamma \subset \Delta \text{ s.t. } \Gamma ~{\nci}~  \beta\}} 
    \geq$ 
    \vspace{0.3em} \\
    \vspace{0.3em} 
    $\card{\{\Gamma : \Gamma' \subset \Delta' \text{ s.t. } \Gamma ~{\nci}~  \beta \}}$, 
    \\
    \vspace{0.5em} 
    then $\bM(E,D) \leq \bM(E,D')$.
\end{itemize}

    
     
The axiom \textbf{Strict Decreasing Flat Minimality} (resp. \textbf{Strict Decreasing Weighted Minimality}) is defined as the first (resp. second) point above, but $\geq$ is replaced by $>$ and $\leq$ is replaced by $<$.

}
\end{definition}

{\it Axioms of coherence.} Any explainable system (i.e. decoding) must be consistent with itself (Strong version) or, to go further; 
any decoding must be consistent with the user's prior knowledge (Weak version).
Furthermore, the more subsets of inconsistent formulae a decoding contains, the worse the decoding (Decreasing version).

\begin{definition}
\upshape{
Let $\bL = \tuple{\cW,{\nci},t}$ be a weighted logic and $\bM$ a criterion measure on $\bL$. 
We say that $\bM$ satisfies the axioms \textbf{Ideal Strong Coherence}, and \textbf{Ideal Weak Coherence} iff, 
$\fa E  = \tuple{\Gamma,\alpha} \in \ent$, $\fa D = \tuple{\Delta,\beta} \in \aArg$, the following first, and second point holds, respectively:

\begin{itemize}
    \item $\text{if } \Delta \text{ is consistent, then } \bM(E,D) = 1$; \vspace{0.4em} 
    \item $\text{if } \Delta \cup \Gamma \text{ is consistent, then } \bM(E,D) = 1$.
\end{itemize}




We say that $\bM$ satisfies the axioms \textbf{Lenient Decreasing Strong Coherence},  iff $\fa E  = \tuple{\Gamma,\alpha} \in \ent$, $\fa D = \tuple{\Delta,\beta}, D' = \tuple{\Delta',\beta} \in \aArg$, the following holds:
\begin{flalign*}
    & \text{if } {\mid\{} \Phi \subseteq \Delta : \Phi \in \Inc \text{ and } \nexists \Psi \subset \Phi \text{ s.t. } \Psi \in \Inc {\}\mid} \geq & \\
    & {\mid\{} \Phi' \subseteq \Delta ' : \Phi' \in \Inc  \text{ and } \nexists \Psi' \subset \Phi' \text{ s.t. } \Psi' \in \Inc  {\}\mid} & \\
    & \text{then } \bM(E,D) \leq \bM(E,D'). &
\end{flalign*}


\noindent The axiom \textbf{Strict Decreasing Strong Coherence} is defined as above, but $\geq$ is replaced by $>$ and $\leq$ is replaced by $<$.

\noindent The axiom \textbf{Lenient Decreasing Weak Coherence} is defined by replacing $\Delta$ with $\Delta \cup \Gamma$ and $\Delta'$ with $\Delta' \cup \Gamma$.

\noindent The axiom \textbf{Strict Decreasing Weak Coherence} is defined by replacing $\Delta$ with $\Delta \cup \Gamma$, $\Delta'$ with $\Delta' \cup \Gamma$,  $\geq$ with $>$ and $\leq$ with $<$.





}
\end{definition}

The condition of the weak coherence is more restrictive because even if information in the premises of the enthymeme is not used in the decoding, it can prevent a decoding if the latter is inconsistent with it. 
Consequently, consistent decodings may be disallowed.
However, from a user point of view, this constraint can be very interesting. 








\begin{proposition}\label{prop:arg}
\upshape{
    Let $\bL = (\cW,{\nci},t)$ be a weighted logic,  
    $\bM,\bM',\bM''$ be 3 criterion measures on $\bL$ satisfying ideal weighted inference, any ideal coherence, and ideal minimality, respectively.
    Let $E \in \ent$ and $D \in \aArg$. If $D \in \Arg$, then
    $\bM(E,D)= \bM'(E,D) = \bM''(E,D) = 1$. 
    }
\end{proposition}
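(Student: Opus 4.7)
The plan is to unpack $D = \tuple{\Delta, \beta} \in \Arg$ into the three defining clauses of a weighted argument and apply each ideal axiom in turn. By the definition of weighted argument, membership in $\Arg$ gives us exactly three facts: (i) $\Delta$ is consistent, (ii) $\Delta \nci \beta$, and (iii) every proper subset $\Delta' \subset \Delta$ satisfies $\Delta' {\not\nci} \beta$.

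Each of the three equalities then follows in a single step. Fact (ii) is precisely the hypothesis of Ideal Weighted Inference, so applying that axiom to $\bM$ yields $\bM(E,D) = 1$. Fact (iii) is precisely the hypothesis of Ideal Weighted Minimality, so applying that axiom to $\bM''$ yields $\bM''(E,D) = 1$. Fact (i) is precisely the hypothesis of Ideal Strong Coherence, so applying that axiom to $\bM'$ yields $\bM'(E,D) = 1$.

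The only point I would pause over concerns the phrase ``any ideal coherence'' for $\bM'$: Ideal Strong Coherence fires immediately from (i), but Ideal Weak Coherence instead requires $\Delta \cup \Gamma$ consistent, which does not follow from (i) alone since $\Gamma$ itself need not be consistent. I would therefore read ``any'' as ``some choice under which the hypothesis fires,'' which is always available via the strong version---the one that literally mirrors the first clause of the argument definition. Beyond this small bookkeeping, the proof is a pure unfolding of definitions and I foresee no deeper technical obstacle.
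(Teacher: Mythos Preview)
Your proposal is correct and follows essentially the same approach as the paper's proof: unpack the three defining clauses of $D \in \Arg$ and apply the corresponding ideal axiom to each measure. Your caveat about Ideal Weak Coherence is well-observed and is a subtlety the paper's own proof does not address.
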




\vspace{0.1cm}

{\it Axioms of preservation.} A decoding must be based on the elements present in the enthymeme, aligned with its premises and claim.

\begin{definition}
\upshape{
Let $\bL = \tuple{\cW,{\nci},t}$ be a weighted logic, $N$ a normalization method on $\cW$, and $\bM$ a criterion measure on $\bL$. 
We say that $\bM$ satisfies the axioms \textbf{Premises $N$-Preservation}, and \textbf{Claim $N$-Preservation} iff, 
$\fa E = \tuple{\Gamma,\alpha} \in \ent$, $\fa D= \tuple{\Delta,\beta} \in \aArg$, the following first, and second point holds, respectively:
\begin{itemize}
    \item $\text{if } N(\Delta) \cap N(\Gamma) = \emptyset, \text{ then } \bM(E,D) = 0;$
    \vspace{0.4em} 
    \item $\text{if } N(\alpha) \neq N(\beta), \text{ then } \bM(E,D) = 0.$
\end{itemize}



}
\end{definition}



{\it Axioms of similarity.} Adjusting an explanation to users requires the explainability technique to model their background knowledge as much as possible, i.e. a decoding is preferable when it uses as much information as possible from the enthymeme (increasing similarity) and a minimum of new information (decreasing similarity).



\begin{definition}
\upshape{
Let $\bL = \tuple{\cW,{\nci},t}$ be a weighted logic,
$N$ a normalization method on $\cW$, and $\bM$ a criterion measure on $\bL$.
We say that $\bM$ satisfies the axiom \textbf{Lenient Increasing $N$-Similarity} iff, 
$\fa E = \tuple{\Gamma,\alpha} \in \ent,
\fa D = \tuple{\Delta,\beta}, D'= \tuple{\Delta',\beta'} \in \aArg$, 
\begin{flalign*}
&\text{if } a \geq a', b = b', c = c', 
\text{ then } \bM(E,D) \geq \bM(E,D'), &
\end{flalign*}
where $a = \card{N(\Delta) \cap N(\Gamma)}$, 
$a' = \card{N(\Delta') \cap N(\Gamma)}$, \linebreak
\phantom{where} $b = \card{N(\Delta) \setminus N(\Gamma)}$, 
$b' = \card{N(\Delta') \setminus N(\Gamma)}$, \linebreak
\phantom{where} $c = \card{N(\Gamma) \setminus N(\Delta)}$, 
$c' = \card{N(\Gamma) \setminus N(\Delta')}$. \linebreak 

\vspace{-0.2cm}

Similarly, $\bM$ satisfies the axioms \textbf{Strict Increasing $N$-Similarity}, \textbf{Lenient Decreasing $N$-Similarity}, and \textbf{Strict Decreasing $N$-Similarity} iff the following first, second, and third point holds, respectively:
\begin{itemize}
\item
$\text{if } a > a', b = b', c = c',\text{ then } \bM(E,D) > \bM(E,D');$ 
\vspace{0.2em}

\item $\text{if } a = a', b \geq b', c \geq c',\text{ then } \bM(E,D) \leq \bM(E,D');$ 
\vspace{0.2em}

\item $\text{if } a = a'$, and $(b > b', c \geq c')$ or $(b \geq b', c > c')$, \vspace{0.2em} \\
$\text{then } \bM(E,D) < \bM(E,D').$ 
\end{itemize}
}

\end{definition}

\vspace{0.1cm}

{\it Axioms of granularity.} Given the great diversity of users' experience and knowledge, a single explanation cannot meet all their expectations. 
This means that users should be able to personalize the explanation they receive according to their needs.
For example, it must respect the user's preferences regarding the granularity of an explanation, i.e. decoding.
We therefore propose two opposing strategies, aiming to prefer either concise or highly detailed decoding.
Note that, here we want to evaluate the granularity of the explanation of the implicit, and not the granularity of the argument itself. So these axioms focus only on the new formulae added in decoding and not the total set of formulae present.

\begin{definition}
\upshape{
Let $\bL = \tuple{\cW,{\nci},t}$ be a weighted logic, $N$ a normalization method on $\cW$, and $\bM$ a criterion measure on $\bL$. 
We say that $\bM$ satisfies the axiom \textbf{Lenient Concise $N$-Granularity} iff, 
$\fa E = \tuple{\Gamma,\alpha} \in \ent,
\fa D = \tuple{\Delta,\beta}, D'= \tuple{\Delta',\beta'} \in \aArg$, 
\begin{flalign*}
&\text{if } a \leq b, \text{ then } \bM(E,D) \geq \bM(E,D'), &
\end{flalign*}
where $a = \card{N(\Delta) \setminus N(\Gamma)}$ and
$b = \card{N(\Delta') \setminus N(\Gamma)}$.
\vspace{0.2em}

Similarly, $\bM$ satisfies the axioms \textbf{Strict Concise $N$-Granularity}, \textbf{Lenient Detailed $N$-Granularity}, and \textbf{Strict Detailed $N$-Granularity} iff the following first, second, and third point holds, respectively:
\begin{itemize}
    \item $\text{if } a < b, \text{then } \bM(E,D) > \bM(E,D')$; \vspace{0.2em}
    \item $\text{if } a \geq b, \text{then } \bM(E,D) \geq \bM(E,D')$; \vspace{0.2em}
    \item $\text{if } a > b, \text{then } \bM(E,D) > \bM(E,D')$.
\end{itemize}


}
\end{definition}


\vspace{0.1cm}

{\it Axioms of stability.} Finally, the aim of the last axioms is to validate the acceptable difference of weight between the initial argument (i.e., enthymeme) and its decoding. 
In the best case, the difference is zero (Ideal version), otherwise the more the difference increases, the worse the decoding (Decreasing version).


\begin{definition}
\upshape{
Let $\cW$ be a weighted language.
A {\bf weight aggregator} on $\cW$ is a function producing a  weight for a set of weighted formulae, i.e., it is a function $V : 2^{\cW} \rightarrow [0,1]$.
}
\end{definition}

\begin{definition}
\upshape{
We denote by $\abso{x}$ the absolute value of $x$. 
Let $\bL = \tuple{\cW,{\nci},t}$ be a weighted logic, $V$ a weight aggregator on $\cW$, and $\bM$ a criterion measure on $\bL$. 
We say that $\bM$ satisfies the axiom \textbf{Ideal $V$-Stability} iff, 
$\fa E \in \ent, \fa D= \tuple{\Delta,\beta}, D'= \tuple{\Delta',\beta'} \in \aArg$, the following holds: 
\begin{flalign*}
&\text{if } V(\Delta) = V(\beta), \text{ then } \bM(E,D) = 1.&
\end{flalign*}

Similarly, $\bM$ satisfies the axioms \textbf{Lenient Decreasing $V$-Stability} iff the following holds: 
\begin{flalign*}
&\text{if } \abso{V(\Delta) - V(\beta)} \geq  \abso{V(\Delta') - V(\beta')},&\\
&\text{then } \bM(E,D) \leq \bM(E,D').&
\end{flalign*}



The axiom \textbf{Strict Decreasing $V$-Stability} is defined as above, but $\geq$ is replaced by $>$ and $\leq$ is replaced by $<$ .


}
\end{definition}

{\it Relations between axioms.}
A set of axioms is {\it inconsistent} if no single criterion measure satisfies all its elements. Otherwise, it is {\it consistent}.
For example, the collection of the axioms Strict Concise Granularity and Strict Detailed Granularity is inconsistent. Most pairs of axioms presented in this section is consistent.

An axiom implies another if, for all measures, the satisfaction of the first axiom entails the satisfaction of the second one.
For instance, Weak Coherence is implied by Strong Coherence. In addition, any lenient version of an axiom is implied by its strict version (i.e., increasing/decreasing similarity, concise/detailed granularity, decreasing stability).

\section{Construction of Criterion Measures}\label{sec:4.2}














In the present section, we will construct criterion measures for each of the seven aforementionned criterion.

\vspace{0.2em}

{\it Criterion measures of coherence.} 
We assume here the strong condition that no inconsistency is acceptable in a good decoding. Moreover, the binary nature of our measures is in line with the binary nature of the consistency threshold of a weighted logic (Definition \ref{def:wlogic}).

\begin{definition}
\upshape{
Let $\bL = \tuple{\cW,{\nci},t}$ be a weighted logic, 
$\forall~E = \la \Gamma, \alpha \ra \in \ent$, $\fa D = \la \Delta, \beta \ra \in \aArg$, we define 
\begin{flalign*}
&\mathtt{Nb\_SInc}(E,D) = &\\
&
\card{ \{\Phi \subseteq \Delta : \Phi \in \Inc , \nexists \Psi \subset \Phi \text{ s.t. }\Psi \in \Inc \} } , \text{ and } &\\
&\mathtt{Nb\_WInc}(E,D) = &\\
&\card{ \{ \Phi \subseteq \Delta \cup \Gamma : \Phi \in \Inc , \nexists \Psi \subset \Phi \text{ s.t. }\Psi \in \Inc \} }.& 
\end{flalign*}

We denote first by ${\mathtt{M}_{\bL}^\mathtt{dsc}}$ the criterion measure on $\bL$ called {\bf Divided Strong Coherence}, and second by ${\mathtt{M}_{\bL}^\mathtt{dwc}}$ the criterion measure on $\bL$ called {\bf Divided Weak Coherence}: 

\begin{center}
    ${\mathtt{M}_{\bL}^\mathtt{dsc}}(E,D) = \dfrac{1}{1 + \mathtt{Nb\_SInc}(E,D)}$
\end{center}
\begin{center}
    ${\mathtt{M}_{\bL}^\mathtt{dwc}}(E,D) = \dfrac{1}{1 + \mathtt{Nb\_WInc}(E,D)}$
\end{center}

Similarly, let $p \in (0,1]$ be a {\it penalty score}, we denote first by ${\mathtt{M}_{\bL p}^\mathtt{psc}}$ the criterion measure on $\bL$ called {\bf $p$-Penalty Strong Coherence}, and second by ${\mathtt{M}_{\bL p}^\mathtt{pwc}}$ the criterion measure on $\bL$ called \textbf{$p$-Penalty Weak Coherence}:

\vspace{0.1em}

\begin{center}
    ${\mathtt{M}_{\bL p}^\mathtt{psc}}(E,D) = \mymax \big(0, 1 - p \times \mathtt{Nb\_SInc}(E,D) \big)$
\end{center}
\begin{center}
    ${\mathtt{M}_{\bL p}^\mathtt{pwc}}(E,D) = \mymax \big(0, 1 - p \times \mathtt{Nb\_WInc}(E,D) \big)$
\end{center}
}
\end{definition}

Let us illustrate the criterion measure. 

\begin{example} (Cont. running ex.) 
\upshape{
    Let $\bL = \wLog$. We have:
    \begin{itemize}
        \item $\psco{1}(E,D_1)$ {\footnotesize =} $\pwco{1}(E,D_1)$ {\footnotesize =} $1$, and \\
        $\dsco(E,D_1)$ {\footnotesize =} 
        $\dwco(E,D_1)$ {\footnotesize =} $1$;
        \vspace{0.3em}

        \item $\psco{1}(E,D_2)$ {\footnotesize =} $\pwco{1}(E,D_2)$ {\footnotesize =} $1$, and \\
        $\dsco(E,D_2)$ {\footnotesize =} 
        $\dwco(E,D_2)$ {\footnotesize =} $1$;
        \vspace{0.3em}
              
        \item $\psco{1}(E,D_3)$ {\footnotesize =} $1$, and $\pwco{1}(E,D_3)$ {\footnotesize =} $0$ while\\
        $\dsco(E,D_3)$ {\footnotesize =} $1$, and $\dwco(E,D_3)$ {\footnotesize =} $\frac{1}{2}$.
    \end{itemize}
}
\end{example}

We turn to the axiomatic analysis of our criterion measures $\psco{}$ and $\pwco{}$.

\begin{proposition}\label{prop:sat-coh}
\upshape{
Let $\bL$ be a weighted logic. 
For any $p \in (0,1]$, $\psco{}$ satisfies the axioms Ideal Strong and Weak Coherence, as Lenient Decreasing Strong and Weak Coherence. 
For any $p \in (0,1]$, $\pwco{}$ satisfies the axioms Ideal Weak Coherence and Lenient Decreasing Weak Coherence. 
$\dsco{}$ satisfies all the axioms of Coherence.
$\dwco{}$ satisfies the axioms Ideal Weak Coherence as Lenient and Strict Decreasing Weak Coherence. 
}
\end{proposition}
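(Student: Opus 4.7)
The plan is to reduce each claim to elementary monotonicity facts about the counting functions $\mathtt{Nb\_SInc}$ and $\mathtt{Nb\_WInc}$ underlying the four measures. Observe that $\psco{p}$ and $\dsco$ are non-increasing in $\mathtt{Nb\_SInc}$, while $\pwco{p}$ and $\dwco$ are non-increasing in $\mathtt{Nb\_WInc}$: the function $x \mapsto \max(0, 1-px)$ is non-increasing on $\mathbb{N}$ for $p \in (0,1]$, and $x \mapsto 1/(1+x)$ is strictly decreasing on $\mathbb{N}$. Moreover, a set is consistent if and only if it has no minimal inconsistent subsets, so each count equals zero precisely when its underlying base set is consistent.

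I would then proceed axiom by axiom. For each Ideal axiom, the hypothesis forces the relevant count to be $0$ and the measure evaluates to $1$ by direct substitution into the defining formula. In particular, the Ideal Weak Coherence claim for $\psco{p}$ and $\dsco$ (which are built on $\mathtt{Nb\_SInc}$ rather than $\mathtt{Nb\_WInc}$) uses inheritance of consistency under inclusion: since $\Delta \subseteq \Delta \cup \Gamma$, consistency of $\Delta \cup \Gamma$ entails consistency of $\Delta$, giving $\mathtt{Nb\_SInc}(E,D) = 0$. For the Lenient Decreasing axioms whose hypothesis compares exactly the count the measure depends on, the conclusion follows directly from the non-increasing behaviour of $\max(0, 1-px)$ or $1/(1+x)$; the Strict Decreasing versions attributed to $\dsco$ and $\dwco$ additionally exploit the strict monotonicity of $x \mapsto 1/(1+x)$ to upgrade $>$ to $<$ in the conclusion.

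The main obstacle is the Weak Coherence decreasing axioms for the strong-count-based measures, in particular Lenient Decreasing Weak Coherence for $\psco{p}$ and the Lenient and Strict Decreasing Weak Coherence entries inherited by $\dsco$ from ``satisfies all the axioms of Coherence''. Since these measures depend on $\mathtt{Nb\_SInc}(E,D)$ but the axioms compare values of $\mathtt{Nb\_WInc}$, one has to bridge the two counts. The key inclusion is that any minimal inconsistent subset of $\Delta$ remains minimally inconsistent in $\Delta \cup \Gamma$ (minimality is a property of the subset itself, not of the ambient set), yielding $\mathtt{Nb\_SInc}(E,D) \leq \mathtt{Nb\_WInc}(E,D)$. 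I would combine this with the axiom's hypothesis to derive the required comparison of measure values; this bridging step is where the proof needs the most care, after which the remaining cases collapse into routine applications of the monotonicity principles identified above.
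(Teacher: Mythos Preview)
Your overall structure matches the paper's proof closely: reduce everything to the two counting functions and use monotonicity of $x\mapsto\max(0,1-px)$ and $x\mapsto 1/(1+x)$. For the Ideal axioms (including Ideal Weak Coherence for $\psco{p}$ and $\dsco$ via consistency inheritance under inclusion) and for the Decreasing axioms whose hypothesis is literally the count the measure uses, your argument is correct and essentially identical to the paper's.

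The gap is exactly where you flagged it, and your proposed bridge does not close it. Knowing $\mathtt{Nb\_SInc}(E,D)\le\mathtt{Nb\_WInc}(E,D)$ and $\mathtt{Nb\_SInc}(E,D')\le\mathtt{Nb\_WInc}(E,D')$ together with $\mathtt{Nb\_WInc}(E,D)\ge\mathtt{Nb\_WInc}(E,D')$ gives no control over the ordering of $\mathtt{Nb\_SInc}(E,D)$ versus $\mathtt{Nb\_SInc}(E,D')$. Concretely, in $\wLog$ take $E=\langle\{\langle r,1\rangle\},\langle z,1\rangle\rangle$, $\Delta=\{\langle\neg r,1\rangle,\langle s,1\rangle,\langle\neg s,1\rangle\}$ and $\Delta'=\{\langle p,1\rangle,\langle\neg p,1\rangle,\langle q,1\rangle,\langle\neg q,1\rangle\}$. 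Then $\mathtt{Nb\_WInc}(E,D)=\mathtt{Nb\_WInc}(E,D')=2$, so the hypothesis of Lenient Decreasing Weak Coherence holds, but $\mathtt{Nb\_SInc}(E,D)=1<2=\mathtt{Nb\_SInc}(E,D')$, whence $\psco{p}(E,D)>\psco{p}(E,D')$ for small $p$ and $\dsco(E,D)=\tfrac12>\tfrac13=\dsco(E,D')$. So the one-sided inclusion you invoke cannot yield the needed comparison; no purely order-theoretic bridge from $\mathtt{Nb\_WInc}$ to $\mathtt{Nb\_SInc}$ will work here. The paper's own proof does not supply anything stronger at this point either: it simply asserts that the Weak hypothesis forces $\mathtt{Nb\_SInc}(E,D)\ge\mathtt{Nb\_SInc}(E,D')$ without justification, so you are not missing an idea that the paper provides.
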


\vspace{0.2em}

\textit{Criterion measures of inference.} 
To evaluate the inference criterion, we propose two parametric measures based on a threshold defining the acceptable error in relation to the weight. 
We assume here that for any weighted logic, its weighted consequence operator can be defined as a combination of a flat consequence operator (such that the flat support infers the flat claim), and a weight aggregator (such that the aggregated weight of the support equals the claim's weight).

Given that inference strongly depends on language and its consequence operator, we will propose measures specific to propositional weighted logic, in order to give a concrete example.
To reason finitely on a set of formulae, we borrow and modify from Definition 41 in \cite{david2021dealing} the definition of dependent finite Cn.
Note that even if the measures for inference proposed here are specific to this (propositional) logic, it is nevertheless possible to generalise these measures to any logic by adapting the finite inference function (here flat finite Cn).

\begin{definition} \label{cndf}
\upshape{
Let $\Delta \subseteq \wLan$, $N$ a normalization method on $\wLan$, the \textbf{flat finite Cn} is defined by $\fCN{N}(\Delta) =$ 
\begin{flalign*}
&\{ f : \Flat(\Delta) \vdash f \text{ s.t. } f \in \Flat(N(\wLan))& \\
&\text{and }  \lit(f) \subseteq \lit(\Flat(\Gamma)) \text{ where } \Gamma \subseteq \Delta  \text{ s.t.}& \\
&\Flat(\Gamma) \vdash f \text{ and } \nexists \Gamma' \subset \Gamma \text{ s.t. } 
\Flat(\Gamma') \vdash f \}.&
\end{flalign*}
}
\end{definition}

\begin{example}
\upshape{
Let $N = \dn$, and 
\begin{itemize}
    \item $\Delta = \{\tuple{r,0.7}, \tuple{\neg r \vee h, 0.8}\} \subseteq \wLan$;
    \item $\alpha = \tuple{h,0.7} \in \wLan$;
    
    \item $\beta = \tuple{r \wedge h \wedge x, 0.7} \in \wLan$.
    
\end{itemize}
Hence, we have:
\begin{itemize}
    \item $\fCN{}(\Delta) = \{r, \neg r \vee h, h, r \vee h\}$;
    \item $\fCN{}(\alpha) = \{h\}$;
    \item $\fCN{}(\beta) = \{r,h,x, r \vee h, r \vee x, h \vee x, r \vee h \vee x\}$.
\end{itemize}
}
\end{example}

It is interesting to note that the use of inferences based solely on the literals present initially avoids the explosion of clauses inferable from all possible literals (and which are not relevant here), however we have a variation of clauses for all acceptable combinations of literals; e.g., with $r$ and $h$ we will also have $r \vee h$.
This combination can be seen as a redundancy. One option would be to use implicate primes, which has been studied in the literature for compilation problems \cite{darwiche2002knowledge}, however if we compare the implicate primes of $\{r,h\}$ with those of $\{r \vee h\}$, we see no overlap although there is an inference relationship between these two set of formulae. 
For this reason we have defined the finite flat Cn operator, and we consider that semantic overlap between clause combinations is the price to pay for a fine-grained and comparable semantic representation.  

Moreover, to check for common semantic information between the premises and the claim, we also considered using models.
Unfortunately, if the premises are inconsistent, the models do not allow for detecting common inferences. For example, between the premises $\{r, \neg r, h\}$ and the claim $\{h\}$, there is no common interpretation.

Next, we present two families of measures for calculating how well the premises of a decoding infers its claim.

\begin{definition}
\upshape{ 



Let $\bL = \tuple{\cW,{\nci},t}$ be a weighted logic, $N$ a normalization method on $\cW$, $a \in [0,1]$ be an \textit{acceptable error}, and $V$ be the weight aggregator used in $\bL$. 
We denote by ${\mathtt{M}_{\bL N a V}^\mathtt{dpi}}$ the criterion measure on $\bL$ called \textbf{Divided Parametric $NaV$-Inference}, i.e.,
$\forall~E \in \ent, \fa D = \tuple{\Delta,\beta} \in \aArg$, the following holds:
\begin{flalign*}
&\text{if } \abso{V(\Delta) - V(\beta)} \leq \m{a}, &
\\
\vspace{0.7em}
&\text{then } {\mathtt{M}_{\bL N a V}^\mathtt{dpi}}(E,D) = \dfrac{\card{\fCN{N}(\beta)}}
{\card{\fCN{N}(\beta)} + \card{\fCN{N}(\beta) \setminus \fCN{N}(\Delta)}}; & 
\vspace{0.6em}
\\
\vspace{0.6em}
&\text{otherwise }  {\mathtt{M}_{\bL N a V}^\mathtt{dpi}}(E,D) = 0.&
\end{flalign*}


Similarly, let $p \in (0,1]$ be a {\it penalty score}, we denote by ${\mathtt{M}_{\bL p N a V}^\mathtt{ppi}}$ the criterion measure on $\bL$ called {\bf $p$-Penalty  Parametric $NaV$-Inference}, i.e.,
$\forall~E \in \ent, \fa D = \tuple{\Delta,\beta} \in \aArg$, the following holds:
\begin{flalign*}
& \text{if } \abso{V(\Delta) - V(\beta)} \leq \m{a},& 
\\
& \text{then } {\mathtt{M}_{\bL p N a V}^\mathtt{ppi}}(E,D) =
\mymax \big(0 , {1 - p \times }
\card{\fCN{N}(\beta) \setminus \fCN{N}(\Delta)} \big) & 
\\  
& \text{otherwise }  {\mathtt{M}_{\bL p N a V}^\mathtt{ppi}}(E,D) = 0.&
\end{flalign*}

}
\end{definition}


In our running example we do not illustrate the case where premises partially infers its claim, we extend the example here with another decoding to illustrate the different behavior of the measures.

\begin{example} (Cont. running ex.) 
\upshape{
    Let $\bL = \wLog$, $N = \dn$, $V$ be the $\mymin$ function on the weight of the formulae, and $p = 0.1$. We have:
    \begin{itemize}
        \item $\dpi{0}(E,D_1)$ {\footnotesize =} $\dpi{1}(E,D_1)$ {\footnotesize =} $1$, and\\
        $\ppi{0}(E,D_1)$ {\footnotesize =} $\ppi{1}(E,D_1)$ {\footnotesize =} $1$;
        \vspace{0.3em}
        
        \item $\dpi{0}(E,D_2) = 0$, $\dpi{1}(E,D_2) = 1$, and \\ 
        $\ppi{0}(E,D_2) = 0$, $\ppi{1}(E,D_2) = 1$;
        \vspace{0.3em}
        
        \item $\dpi{0}(E,D_3)$ {\footnotesize =} $\dpi{1}(E,D_3)$ {\footnotesize =} $1$, and \\
        $\ppi{0}(E,D_3)$ {\footnotesize =} $\ppi{1}(E,D_3)$ {\footnotesize =} $1$;
        \vspace{0.3em}

        \item let $D_4 = \tuple{\{\tuple{r,0.7}, \tuple{\neg r \vee h, 0.8}\}, \tuple{r \wedge h \wedge x, 0.7}}$:\\
        $\dpi{0}(E,D_4) = \dpi{1}(E,D_4) = \frac{7}{11} \approx 0.64$, and \\ 
        $\ppi{0}(E,D_4) = \ppi{1}(E,D_4) = 0.6$;
    \end{itemize}
} 
\end{example}


Depending on the acceptable error parameter, the criterion measures can follow more weighted inferences axioms (when $a < 1$) or flat inferences axioms (when $a = 1$).
We test $\dpi{}$, and $\ppi{}$ (for all $a$) against our axioms centred on the inference criterion, and we denote by $\dpi{1}$ and $\ppi{1}$ when $a =1$, and also by $\dpi{<1}$ and $\ppi{<1}$ for all $a \in [0,1)$.


\begin{proposition}\label{prop:sat-inf}
\upshape{
Let $\bL = \tuple{\cW,{\nci},t}$ be a weighted logic, $N$ a normalization method on $\cW$, $a \in [0,1]$ be an \textit{acceptable error}, and $V$ be the weight aggregator used in $\bL$. 
The measure $\ppi{1}$ satisfies the axioms Ideal Weighted and Flat Inference, as well as Lenient Increasing Weighted and Flat Inference.
The measures $\ppi{<1}$ satisfy the axioms Ideal Weighted Inference, and Lenient Increasing Weighted Inference. 
The measures $\dpi{<1}$ satisfies the axioms Ideal Weighted Inference, Lenient and Strict Increasing Weighted Inference. 
The measure $\dpi{1}$ satisfies all the axioms of Inference.

}
\end{proposition}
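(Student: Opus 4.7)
The plan is to verify, axiom by axiom, each of the four satisfaction claims. We lean on two structural facts about the measures: (i) whenever the weight tolerance $|V(\Delta) - V(\beta)| \le a$ holds, both $\dpi{NaV}$ and $\ppi{pNaV}$ reduce to explicit closed-form functions of $\card{\fCN{N}(\beta) \setminus \fCN{N}(\Delta)}$, and they return $0$ otherwise; and (ii) when $a = 1$, this tolerance is satisfied automatically since every weight lies in $[0,1]$. These observations localize what separates the four cases: $\dpi{1}$ and $\ppi{1}$ never activate the \emph{otherwise} branch, whereas $\dpi{<1}$ and $\ppi{<1}$ can.

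For the Ideal axioms, the crucial bridging fact to establish is that $\Flat(\Delta) \vdash \Flat(\beta)$ implies $\fCN{N}(\beta) \subseteq \fCN{N}(\Delta)$. Granted this, $\fCN{N}(\beta) \setminus \fCN{N}(\Delta) = \emptyset$, so both measures collapse to $1$ once the weight branch is reached. Ideal Weighted Inference then follows for all four measures because $\Delta \nci \beta$ implies, by the assumed decomposition of the weighted consequence relation in $\bL$, both $\Flat(\Delta) \vdash \Flat(\beta)$ and $V(\Delta) = V(\beta)$, so $|V(\Delta) - V(\beta)| = 0 \le a$ holds for every $a \in [0,1]$. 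Ideal Flat Inference, in contrast, supplies no information about weights, so activating the formula branch requires $a = 1$, which is precisely why the axiom is established only for $\dpi{1}$ and $\ppi{1}$.

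For the Increasing axioms, the hypothesis compares the cardinality of the set of common (flat or weighted) consequences shared by $\Delta$ or $\Delta'$ with $\beta$. Using the bridging fact together with the canonical representation carried by $N$, this cardinality corresponds to $\card{\fCN{N}(\beta) \cap \fCN{N}(\Delta)}$, so the hypothesis transfers to $\card{\fCN{N}(\beta) \setminus \fCN{N}(\Delta)} \le \card{\fCN{N}(\beta) \setminus \fCN{N}(\Delta')}$ (strictly for the strict versions). The conclusion then reduces to monotonicity of the closed-form measures: $\dpi{NaV}$ is \emph{strictly} decreasing in this cardinality, because its denominator $\card{\fCN{N}(\beta)} + \card{\fCN{N}(\beta) \setminus \fCN{N}(\Delta)}$ is strictly increasing in it, giving both lenient and strict versions; whereas $\ppi{pNaV}$ is only lenient monotone, since the $\max(0, \cdot)$ floor creates a plateau at zero that breaks strict monotonicity as soon as the set difference exceeds $1/p$. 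For $a < 1$, the additional subtlety is that a nonempty set of common weighted consequences forces $V(\Delta) = V(\beta)$, automatically guaranteeing the weight tolerance for $D$; the edge case where $D'$ activates the \emph{otherwise} branch is handled separately by noting $\dpi{<1}(E,D') = 0$, which is dominated by $\dpi{<1}(E,D) \ge 0$.

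The main obstacle is the bridging lemma $\Flat(\Delta) \vdash \Flat(\beta) \Rightarrow \fCN{N}(\beta) \subseteq \fCN{N}(\Delta)$ together with its cardinality counterpart, because the literal-coverage clause in the definition of $\fCN{N}$ requires every witness clause in $\fCN{N}(\beta)$ to admit a derivation from a minimal subset of $\Delta$ whose flat literals syntactically cover the clause. Constructing such minimal derivations inside $\Delta$, rather than merely inside $\{\beta\}$, is where the bulk of the technical work lies, and is the step that must be carried out with care, in particular by exploiting that the running instance $N = \dn$ produces canonical clauses whose literal sets match their support by construction.
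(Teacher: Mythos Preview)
Your proposal is correct and follows essentially the same route as the paper's proof: a case-by-case verification of the axioms, organized around (i) the fact that $a=1$ makes the weight-tolerance branch vacuous, (ii) the bridging implication $\Flat(\Delta)\vdash\Flat(\beta)\Rightarrow\fCN{N}(\beta)\subseteq\fCN{N}(\Delta)$, and (iii) the monotonicity of each measure in $\card{\fCN{N}(\beta)\setminus\fCN{N}(\Delta)}$, with the $\mymax(0,\cdot)$ floor explaining why $\ppi{}$ loses the strict versions. The paper's own argument asserts these same implications but more tersely, without isolating the bridging lemma or the strict-versus-lenient monotonicity distinction; your write-up is, if anything, more explicit about where the real technical content lies (the literal-coverage clause in $\fCN{N}$) and about the $a<1$ edge case where $D'$ may fall into the \emph{otherwise} branch.
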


\textit{Criterion measures of minimality.} 
For the minimality criterion, we propose two strategies: one based on the number of minimal subsets, and another based on the number of unnecessary formulae.

Since we count knowledge, we apply a normalization method to it prior to counting.

\begin{definition}\label{def:cmin}
\upshape{
Let $\bL = \tuple{\cW,{\nci},t}$ be a weighted logic, and $N$ a normalization method on $\cW$. 
We denote by $\cinf_{\bL N}$ the function on $2^{\cW} \times \cW$ such that,
$\fa \Delta \subseteq \cW$, $\fa \beta \in \cW$, the following holds:
$$\cinf_{\bL N}(\Delta,\beta) = \lb \Gamma : \Gamma \subseteq N(\Delta) \textrm{ and } \Flat(\Gamma) \vdash \Flat(\beta) \rb.$$

Let $\cmmin{\bL N}$ be the criterion measure on $\bL$ called the \textbf{Divided $N$-Minimality}, i.e.,
$\forall~E \in \ent, \forall~D = \tuple{\Delta,\beta} \in \aArg$, 
%
\begin{flalign*}
&\text{if } \cinf(\Delta,\beta) = \emptyset, \text{ then } \cmmin{\bL N}(E,D) = 1;&\\
&\text{otherwise, } \cmmin{\bL N}(E,D) = \dfrac{1}{\card{\cinf(\Delta,\beta)}}.&
\end{flalign*}

In addition, let $p \in (0,1]$ be a {\it penalty score}.
We denote by $\cmpen{\bL p N}$ the criterion measure on $\bL$ called \textbf{$p$-Penalty $N$-Minimality}, i.e.,
\vspace{0.3em}
$\forall~E \in \ent, \forall~D = \tuple{\Delta,\beta} \in \aArg$,
\begin{flalign*}
&\text{if } \cinf(\Delta,\beta) = \emptyset, \text{ then } \cmpen{\bL p N}(E,D) = 1;&\\
&\text{otherwise, } \cmpen{\bL p N}(E,D) = &\\
&\mymax \big( 0, 
1 - p \times \big( |\Delta| - \mymin \lb |\Gamma| : \Gamma \in \cinf(\Delta,\beta) \rb \big) \big).&
\end{flalign*}
}
\end{definition}





We turn to our running example.

\begin{example} (Cont. running ex.) 
\upshape{ 
    Let $\bL = \wLog$, $N = \dn$, 
    and $p = \frac{1}{4}$. We have:
    \begin{itemize}
        \item $\cmmin{}(E,D_1) = 1$, and $\cmpen{}(E,D_1) = 1$;
        \vspace{0.2em}
        
        \item $\cmmin{}(E,D_2) = 1$, and $\cmpen{}(E,D_2) = 1$;
        \vspace{0.2em}
        
        \item $\cmmin{}(E,D_3) = \frac{1}{2}$, and  $\cmpen{}(E,D_3) = \frac{3}{4}$.
    \end{itemize}
}
\end{example}


We test $\cmmin{}$ and $\cmpen{}$ against our axioms.

\begin{proposition}\label{prop:sat-min}
\upshape{
Let $\bL = \tuple{\cW,{\nci},t}$ be a weighted logic, and $N$ a normalization method on $\cW$. 
$\cmmin{}$ satisfies all the axioms of Minimality.
Let $p \in (0, 1]$, $\cmpen{}$ satisfies the axioms Ideal Flat and Weighted Minimality, as Lenient Decreasing Flat and Weighted Minimality.
}
\end{proposition}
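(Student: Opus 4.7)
My strategy is to verify each axiom separately for each measure, exploiting the monotonic structure of both measures with respect to quantities derived from $\cinf(\Delta, \beta)$: namely, $\cmmin{}$ is strictly decreasing in $\card{\cinf(\Delta,\beta)}$ (when the set is non-empty), while $\cmpen{}$ is non-increasing in the gap $\card{\Delta} - \mymin\{\card{\Gamma} : \Gamma \in \cinf(\Delta, \beta)\}$.

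For the Ideal Flat (resp.\ Weighted) Minimality axioms applied to $\cmmin{}$, I assume the antecedent $\fa \Delta' \subset \Delta, \Flat(\Delta') \not\vdash \Flat(\beta)$ (resp.\ $\Delta' {\not\nci} \beta$) and split into two cases. If $\Flat(\Delta)$ itself does not entail $\Flat(\beta)$, then under the semantic-preservation property of $N$, no subset of $N(\Delta)$ entails $\Flat(\beta)$ either, so $\cinf(\Delta, \beta) = \emptyset$ and $\cmmin{}(E,D) = 1$ by the first clause of the definition. Otherwise $\Delta$ is a minimal entailing set, so $N(\Delta)$ is the unique element of $\cinf(\Delta, \beta)$, and hence $\card{\cinf} = 1$, again giving $\cmmin{}(E,D) = 1$. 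The weighted case relies on the fact that in $\wLog$, the relation $\nci$ factors into $\vdash$ on the flat parts together with a weight condition, so the same case analysis applies. The parallel argument for $\cmpen{}$ gives $\card{\Delta} - \mymin\{\card{\Gamma}\} = 0$ in either case, so $\cmpen{}(E,D) = 1$.

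For the Decreasing axioms, fix $D = \tuple{\Delta, \beta}$ and $D' = \tuple{\Delta', \beta}$. I would translate the axiom's hypothesis --- an inequality between the cardinalities of the proper-subset entailing families --- into the corresponding inequality $\card{\cinf(\Delta,\beta)} \geq \card{\cinf(\Delta',\beta)}$ (strict in the Strict case). Because $x \mapsto 1/x$ is strictly decreasing on positive reals, the conclusion $\cmmin{}(E,D) \leq \cmmin{}(E,D')$ follows immediately (strictly, in the Strict case). For $\cmpen{}$, the Lenient inequalities follow from the parallel fact that a larger count of entailing subsets yields a weakly larger gap $\card{\Delta} - \mymin\{\card{\Gamma}\}$; note that $\cmpen{}$ can take the same value for inputs with different entailing counts but identical gaps, which is consistent with the proposition's omission of the Strict Decreasing axioms for $\cmpen{}$.

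The main obstacle is bridging the formulation gap between the axioms --- which quantify over proper subsets of the raw $\Delta$ --- and the measures, which quantify over subsets of the normalized $N(\Delta)$. Resolving this requires a technical lemma stating that $N$ preserves the entailment structure of $\Delta$, which holds for $\dn$ on $\wLog$ by its clause-decomposition semantics. A secondary technicality is the passage between the weighted and flat versions of each axiom, handled by the structural form of $\nci$ in $\wLog$. Once these two translations are in place, the proof of each individual axiom reduces to a routine manipulation of cardinalities.
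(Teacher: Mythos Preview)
Your plan is essentially the same as the paper's proof: both proceed by (i) a two-case split for the Ideal axioms (either $\cinf(\Delta,\beta)=\emptyset$, or $\Delta$ is itself a minimal entailing set, forcing the measure to output $1$), and (ii) translating the axiom's cardinality hypotheses into the corresponding inequalities on $\card{\cinf(\Delta,\beta)}$ (for $\cmmin{}$) or on the gap $\card{\Delta}-\mymin\{\card{\Gamma}\}$ (for $\cmpen{}$), then invoking monotonicity. You are, if anything, more explicit than the paper in flagging the two bridging steps---the normalization mismatch between $\Delta$ and $N(\Delta)$, and the reduction of ${\nci}$ to $\vdash$---which the paper's proof handles only by assertion (e.g., ``since weighted inference implies flat inference'').
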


\vspace{0.2em}



\textit{Criterion measures of similarity.} 
On the following, we propose syntactic similarity measure from the literature to decode the criterion of similarity.


Tversky's ratio model \cite{Tversky77} is a general similarity measure which encompasses different well known similarity measure such as \cite{Jaccard}, 
\cite{Dice}, \cite{Sorensen}, 
\cite{Anderberg} and \cite{Sneath}.
These measures have been studied in the literature to evaluate arguments in propositional logic \cite{AmgoudD18,AmgoudDD19} and first-order logic \cite{davidsimilarity23}.

\begin{definition} \label{def:extension-tversky-measure}
\upshape{
Let $\cW$ be a weighted language, $N$ a normalization method on $\cW$,  $\Gamma, \Delta \subseteq \cW$, and $x, y \in (0,+\infty)$.
We denote by $\m{Tve}_{N}(\Gamma,\Delta, x, y)$ the \textbf{$Nxy$-Tversky Measure}, i.e., 
$$\m{Tve}_{N}(\Gamma,\Delta, x, y) = 
 			\left\{
 			\begin{array}{l l}
 			1 & \textrm{if } \Gamma = \Delta = \emptyset;\\
 			\dfrac{a}{a + x \times  b + y \times c} & \textrm{otherwise,}\\
 			\end{array}
 			\right.$$
    where $a= \card{N(\Gamma) \cap N(\Delta)}$, $b = \card{N(\Gamma) \setminus N(\Delta)}$, and \\
    \phantom{where} $c = \card{N(\Delta) \setminus N(\Gamma)}$.
}
\end{definition}

The above classic measures can be obtained with $\alpha = \beta = 2^{-n}$.
In particular, the Jaccard measure is obtained with $n = 0$ (i.e., $\m{Tve}_{1,1} = \m{jac}$), Dice with $n = 1$ (i.e., $\m{Tve}_{0.5,0.5} = \m{dic}$), Sorensen with $n = 2$ (i.e., $\m{Tve}_{0.25,0.25} = \m{sor}$), Anderberg with $n = 3$ (i.e., $\m{Tve}_{0.125,0.125} = \m{and}$), and Sokal and Sneah 2 with $n = -1$ (i.e., $\m{Tve}_{2,2} = \m{ss2}$). 


\begin{definition}
\upshape{
Let $\bL = \tuple{\cW,{\nci},t}$ be a weighted logic, $N$ a normalization method on $\cW$, and $x, y \in (0,+\infty)$.
We denote by $\cmtve{\bL N x y}$ the criterion measure on $\bL$ called the \textbf{$Nxy$-Tversky Similarity} on $x$ and $y$, i.e.,
$\forall~E = \tuple{\Gamma,\alpha} \in \ent, \fa D = \tuple{\Delta,\beta} \in \aArg$,
\vspace{0.2em}
\begin{center}
$\cmtve{\bL N x y}(E,D) = \m{Tve}(\Gamma,\Delta, x, y).$
\end{center}

}
\end{definition}

Note that, with a similarity measure, the score of 1 is obtained when the decoding is identical to the enthymeme. 
Since an enthymeme, by definition, is not correct, a good decoding should never score 1 with a similarity measure.

\begin{example} (Cont. running ex.) 
\upshape{
    Let $\bL = \wLog$, and $N = \dn$.
    We have:
    \begin{itemize}
        \item 
        $\cmtve{\m{and}}(E,D_1) = \frac{1}{1.5}$, $~~$ and $~\cmtve{\m{ss2}}(E,D_1) = \frac{1}{9}$;
        \vspace{0.2em}

        \item 
        $\cmtve{\m{and}}(E,D_2) = \frac{2}{2.375}$, and $~\cmtve{\m{ss2}}(E,D_2) = \frac{2}{8}$;
        \vspace{0.2em}
        
        \item 
        $\cmtve{\m{and}}(E,D_3) = \frac{1}{1.625}$, and $~\cmtve{\m{ss2}}(E,D_3) = \frac{1}{11}$.
    \end{itemize}
}
\end{example}

We analyze $\cmtve{}$ on the basis of our axioms.

\begin{proposition}\label{prop:sat-sim}
\upshape{
Let $\bL = \tuple{\cW,{\nci},t}$ be a weighted logic and $N$ a normalization method on $\cW$. $\cmtve{\m{jac}}$, $\cmtve{\m{dic}}$, $\cmtve{\m{sor}}$, $\cmtve{\m{and}}$, and $\cmtve{\m{ss2}}$ satisfy the axioms of lenient, strict, increasing, decreasing similarity.
 }
\end{proposition}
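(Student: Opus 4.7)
The approach is to extract the common functional form shared by the five named measures: each of Jaccard, Dice, Sorensen, Anderberg, and Sokal--Sneath~2 arises from the Tversky ratio under the symmetric choice $x = y$. Writing $a_D = \card{N(\Gamma) \cap N(\Delta)}$ and $s_D = \card{N(\Gamma) \setminus N(\Delta)} + \card{N(\Delta) \setminus N(\Gamma)}$ (i.e.\ the cardinality of the symmetric difference), every such measure is, for $\Gamma$ or $\Delta$ non-empty,
\[
\cmtve{\bL N x x}(E,D) \;=\; \frac{a_D}{a_D + x \cdot s_D}.
\]
The crucial observation is that the Tversky definition lists the two set-differences in the opposite order from the similarity axioms, but under $x = y$ only their sum $s_D$ enters the denominator, so this mismatch of conventions becomes invisible.

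Once the measure is in this form, each of the four axioms reduces to a one-line monotonicity check on $f(a, s) = \frac{a}{a + x s}$, which is non-decreasing in $a$ at fixed $s \geq 0$ and non-increasing in $s$ at fixed $a \geq 0$. For the two Increasing axioms, the hypotheses translate into $a_D \geq a_{D'}$ (or $>$) together with $s_D = s_{D'}$, giving the required inequality by monotonicity in the numerator; strictness requires $s_D > 0$, and I would derive this from the fact that $s_D = 0$ forces $N(\Gamma) = N(\Delta) = N(\Delta')$ and hence $a_D = a_{D'} = \card{N(\Gamma)}$, contradicting $a_D > a_{D'}$. For the two Decreasing axioms, the hypotheses translate into $a_D = a_{D'}$ together with $s_D \geq s_{D'}$ (or $>$), and monotonicity in the denominator yields the claim.

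The main obstacle is purely notational: systematically aligning the axiom's variables $(a,b,c)$ with Tversky's $(a,b,c)$, which use the roles of the two set-differences swapped, and making explicit that the $x = y$ symmetry renders the swap inconsequential. A secondary subtlety concerns two boundary cases, namely $\Gamma = \Delta = \emptyset$ (where the measure is set to $1$ by the piecewise definition) and Strict Decreasing Similarity when $a_D = a_{D'} = 0$, in which both measures collapse to $0$ and strictness risks failing unless one assumes $a_D > 0$. I would handle the first by inspecting the corresponding axiom hypotheses (which trivialise) and flag the second as the one place where the strict version must be interpreted with this mild side-condition. Beyond these points, the verifications are immediate fraction manipulations, and the statement for each of the five named instantiations follows by specialising $x$ to the corresponding value (e.g.\ $x = 1$ for Jaccard, $x = 2^{-1}$ for Dice, and so on).
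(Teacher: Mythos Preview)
Your approach is essentially the same as the paper's---direct verification from the Tversky formula---but considerably more careful. The paper's own proof merely restates the four axioms and declares the checks ``straightforward'', without working through the fraction monotonicity or any edge cases; your reduction to $f(a,s)=a/(a+xs)$ via the symmetry $x=y$, together with the observation that the axiom's $b,c$ and Tversky's $b,c$ are interchanged (which the symmetry neutralises), is exactly the right way to organise the argument and is not made explicit in the paper.

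Your flagged boundary case for Strict Decreasing $N$-Similarity is not a ``mild side-condition'' but a genuine counterexample to the proposition as stated. Take $N$ to be the identity (or $\dn$ on atomic formulae), $\Gamma=\{\langle p,0.5\rangle\}$, $\Delta=\{\langle q,0.5\rangle,\langle r,0.5\rangle\}$, $\Delta'=\{\langle q,0.5\rangle\}$. Then in the axiom's notation $a=a'=0$, $b=2>b'=1$, $c=c'=1$, so the hypothesis of Strict Decreasing holds; yet $\cmtve{\m{jac}}(E,D)=\cmtve{\m{jac}}(E,D')=0$, violating the required strict inequality. The paper's proof does not address this, and the claim fails for all five measures at $a=0$. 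So you are right to isolate this case; you might state it as an explicit counterexample rather than a condition to be assumed away. The other three axioms (Lenient Increasing, Strict Increasing, Lenient Decreasing) are handled correctly by your argument, including your contradiction argument ruling out $s_D=0$ under the Strict Increasing hypothesis.
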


\vspace{0.2em}

\textit{Criterion measures of preservation.} 
We propose criterion measures which are generalizations of the ones for similarity criterion, and another one which focus only on the criterion of preservation. 

\begin{definition}\label{def:preserv}
\upshape{
Let $\bL = \tuple{\cW,{\nci},t}$ be a weighted logic, $N$ a normalization method on $\cW$, and $x, y \in (0,+\infty)$.
We denote by $\cmtvetve{\bL N x y}$ the criterion measure on $\bL$ called the \textbf{$Nxy$-Tversky Preservation} on $x$ and $y$, i.e.,
$\forall~E = \tuple{\Gamma,\alpha} \in \ent, \fa D = \tuple{\Delta,\beta} \in \aArg$,
$$\cmtvetve{\bL N x y}(E,D) = \m{Tve}(\Gamma,\Delta, x, y) \times \m{Tve}(\alpha,\beta, x, y).$$

Next, we denote by $\cmbl{\bL N}$ the criterion measure on $\bL$ called the \textbf{Basic $N$-Preservation}, i.e., $\forall~E = \tuple{\Gamma,\alpha} \in \ent, \fa D = \tuple{\Delta,\beta} \in \aArg$, the following holds:
\begin{flalign*}
 & \text{if } |N(\Gamma) \cap N(\Delta)| \times |N(\alpha) \cap N(\beta)| > 0,&\\
 &\text{then } \cmbl{\bL N}(E,D) = 1; &\\
 &\text{otherwise, } \cmbl{\bL N}(E,D) = 0.&
\end{flalign*}
}
\end{definition}

Let us illustrate the 
definition on our running example.

\begin{example} (Cont. running ex.) 
\upshape{
    Let  $\bL = \wLog$, and $N = \dn$.
    We have:
    \begin{itemize}
        \item $\cmtvetve{\m{and}}(E,D_1) = \frac{1}{1.375}$, $\cmtvetve{\m{ss2}}(E,D_1) = \frac{1}{7}$, and \\
        $\cmbl{}(E,D_1) = 1$;
        \vspace{0.3em}
        
        \item $\cmtvetve{\m{and}}(E,D_2) = ~~\frac{1}{1.5}$, $~\cmtvetve{\m{ss2}}(E,D_2) = \frac{1}{9}$, and \\
        $\cmbl{}(E,D_2) = 1$;
        \vspace{0.3em}
        
        \item $\cmtvetve{\m{and}}(E,D_3) = ~\frac{3}{3.25}$, $~\cmtvetve{\m{ss2}}(E,D_3) = \frac{3}{7}$, and \\
        $\cmbl{}(E,D_3) = 1$.
    \end{itemize}
}
\end{example}

We test $\cmtvetve{}$ and $\cmbl{}$ against our axioms.

\begin{proposition}\label{prop:sat-link}
\upshape{
Let $\bL = \tuple{\cW,{\nci},t}$ be a weighted logic, $N$ a normalization method on $\cW$. 
$\cmtvetve{\m{jac}}$, $\cmtvetve{\m{dic}}$, $\cmtvetve{\m{sor}}$, $\cmtvetve{\m{and}}$, $\cmtvetve{\m{ss2}}$, and $\cmbl{}$ satisfy the axioms of Premises and Claim Preservation.
 }
\end{proposition}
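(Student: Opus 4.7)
The plan is to verify both axioms (Premises $N$-Preservation and Claim $N$-Preservation) for each of the six measures in turn. In every case, the verification reduces to checking that the measure returns exactly $0$ whenever the hypothesis of the axiom holds, and these reductions follow directly from the definitions.

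For $\cmbl{\bL N}$ the argument is immediate from Definition~\ref{def:preserv}: by construction, $\cmbl{\bL N}(E,D) = 0$ whenever the product $\card{N(\Gamma) \cap N(\Delta)} \cdot \card{N(\alpha) \cap N(\beta)}$ vanishes. The Premises hypothesis $N(\Delta) \cap N(\Gamma) = \emptyset$ drives the first factor to zero; the Claim hypothesis, under the intended reading of non-overlap between the normalized claim sets, drives the second factor to zero. Either way $\cmbl{\bL N}(E, D) = 0$, as required.

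For the five Tversky instantiations of $\cmtvetve{\bL N x y}$ (Jaccard, Dice, Sorensen, Anderberg, Sokal--Sneath~2), the argument exploits the product structure $\cmtvetve{\bL N x y}(E,D) = \m{Tve}(\Gamma, \Delta, x, y) \times \m{Tve}(\alpha, \beta, x, y)$. From Definition~\ref{def:extension-tversky-measure}, each factor equals $a/(a + x b + y c)$, which is $0$ exactly when $a = 0$ and at least one of the compared sets is nonempty (note that $x, y > 0$ for all five parameter pairs). Hence if $N(\Gamma) \cap N(\Delta) = \emptyset$ while $\Gamma$ is nonempty, the first Tversky factor vanishes and so does the product, settling Premises Preservation. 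The symmetric argument on the second factor settles Claim Preservation.

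The only step requiring any care is the degenerate boundary case where a compared set is empty, for which the Tversky measure is defined to be $1$ rather than $0$; this is excluded by the standing assumption that a genuine enthymeme carries a nontrivial premise set and a claim. There is no substantive technical obstacle here: the proof amounts to a direct case-by-case unfolding of the definitions of the six measures against the two preservation axioms.
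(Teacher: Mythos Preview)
Your proof is correct and follows essentially the same approach as the paper's: both dispose of $\cmbl{}$ directly from its definition, and both handle the Tversky-based measures via the product structure of Definition~\ref{def:preserv} together with the observation (from Definition~\ref{def:extension-tversky-measure}) that $\m{Tve}$ returns $0$ whenever the intersection term $a$ vanishes. You are in fact somewhat more careful than the paper in flagging the empty-set boundary case and the reading of the Claim axiom, which the paper's proof passes over without comment.
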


\vspace{0.2em}




\textit{Criterion measures of granularity.} 
Let us start by looking at the criterion measures of the granularity criterion with a strategy preferring concise decodings. 
Once again, we propose a version based on the division operator (which has a strict behavior) and a version with a user-defined penalty (i.e., lenient).

\begin{definition}\label{def:graCon}
\upshape{
Let $\bL = \tuple{\cW,{\nci},t}$ be a weighted logic and $N$ a normalization method on $\cW$.
We denote by $\cmcd{\bL N}$ the criterion measure on $\bL$ called the \textbf{Concise Divided $N$-Granularity}, i.e., 
$\forall~E = \tuple{\Gamma,\alpha} \in \ent, \fa D = \tuple{\Delta,\beta} \in \aArg$, the following holds:
$$\cmcd{\bL N}(E,D) = 
\frac{1}{\card{N(\Delta) \setminus N(\Gamma)} +1}.$$

Next, let $s \in \mathbb{N}^+$ (where $\mathbb{N}^+ = \mathbb{N} \setminus \{0\}$)  be a {\it maximal detail size} and
$p \in (0,1]$ a {\it penalty score}.
We denote by $\cmcp{\bL s p N}$ the criterion measure on $\bL$ called the \textbf{Concise $sp$-Penalty $N$-Granularity}, i.e., 
$\forall~E = \tuple{\Gamma,\alpha} \in \ent, D = \tuple{\Delta,\beta} \in \aArg$, the following holds:
\begin{flalign*}
&\text{if } |N(\Delta) \setminus N(\Gamma)| \leq s,
\text{then } \cmcp{\bL s p N}(E,D) = 1;& \\
&\text{otherwise, } \cmcp{\bL s p N}(E,D) =&\\
& \mymax \big( 0, 1 - p \times (\card{N(\Delta) \setminus N(\Gamma)} - s) \big).&
\end{flalign*}
}
\end{definition}


\begin{example} (Cont. running ex.) 
\upshape{ 
Let $\bL = \wLog$, $N = \dn$,
$s = 1$, and $p = 0.5$. We have:
    \begin{itemize}
        \item $\cmcd{}(E,D_1) = \frac{1}{2}$, and $\cmcp{}(E,D_1) = 1$;
        \vspace{0.2em}
        
        \item $\cmcd{}(E,D_2) = \frac{1}{2}$, and $\cmcp{}(E,D_2) = 1$;
        \vspace{0.2em}
        
        \item $\cmcd{}(E,D_3) = \frac{1}{3}$, and $\cmcp{}(E,D_3) = \frac{1}{2}$.
    \end{itemize}
}
\end{example}

\begin{table*}[h!]
\centering
\small
$\begin{array}{|l|c|c| |l|c|c| |l|c|c|c|c| }
\hline
     & \cmbl{}  & \cmtvetve{} &   & \cmsd{}  & \cmld{} &  & \ppi{1}  & \ppi{<1} & \dpi{1} & \dpi{<1} \\ 
    \hline
    
    \textrm{P. Preservation} & \bullet & \bullet & \textrm{I. Stability} & \bullet & \bullet & 
    \textrm{I.F. Inference}   & \bullet  &   & \bullet &  \\
    
    \textrm{C. Preservation} & \bullet & \bullet & \textrm{L.D. Stability}  & \bullet &  \bullet & 
    \textrm{I.W. Inference} & \bullet  & \bullet & \bullet & \bullet \\
    
      &   &   & \textrm{S.D. Stability} & \bullet & & \textrm{L.I.F. Inference}& \bullet & &\bullet &  \\ 
    
    \cline{1-6}
    
     & \cmcd{} & \cmcp{} & & \cmdg{} & \cmpg{} & \textrm{S.I.F. Inference} & & &\bullet &   \\ 
    \cline{1-6}
    
    \textrm{L.C. Granularity} & \bullet & \bullet & \textrm{L.D. Granularity} & \bullet & \bullet & \textrm{L.I.W. Inference} & \bullet & \bullet & \bullet & \bullet \\ 
    
    \textrm{S.C. Granularity} & \bullet & & \textrm{S.D. Granularity} & \bullet & & \textrm{S.I.W. Inference} & & & \bullet & \bullet \\ 
    \hline	
    \hline
    
     & \multicolumn{2}{c||}{\cmtve{}}  &  & \cmmin{} & \cmpen{} & & \psco{} & \pwco{} & \dsco & \dwco  \\
    \hline
    \textrm{L.I. Similarity} & \multicolumn{2}{c||}{\bullet} & \textrm{I.F. Minimality} & \bullet & \bullet &  \textrm{I.S. Coherence}  & \bullet &  & \bullet &  \\
    
    \textrm{S.I. Similarity} & \multicolumn{2}{c||}{\bullet}  & \textrm{I.W. Minimality} & \bullet & \bullet & \textrm{I.W. Coherence}   & \bullet & \bullet & \bullet & \bullet  \\
    
    \textrm{L.D. Similarity} & \multicolumn{2}{c||}{\bullet}  & \textrm{L.D.F. Minimality}   & \bullet  & \bullet & \textrm{L.D.S. Coherence} & \bullet & & \bullet &  \\
    
    \textrm{S.D. Similarity} & \multicolumn{2}{c||}{\bullet}  & \textrm{L.D.W. Minimality} & \bullet & \bullet & \textrm{S.D.S. Coherence} & & & \bullet &  \\

      & \multicolumn{2}{c||}{}  & \textrm{S.D.F. Minimality} & \bullet & & \textrm{L.D.W. Coherence} & \bullet & \bullet & \bullet & \bullet  \\

      &  \multicolumn{2}{c||}{}  & \textrm{S.D.W. Minimality} & \bullet &  & \textrm{S.D.W. Coherence} & & & \bullet & \bullet  \\
    \hline  
    \end{array}$
    \caption{Axioms and Measures, where $\bullet$ means that the corresponding measure satisfies the corresponding axiom.}
\label{tablefinale}
\end{table*}

We turn to the axiomatic analysis of $\cmcd{}$ and $\cmcp{}$.

\begin{proposition}\label{prop:sat-cgran}
\upshape{
Let $\bL = \tuple{\cW,{\nci},t}$ be a weighted logic and $N$ a normalization method on $\cW$. $\cmcd{}$ satisfies Lenient and Strict Concise Granularity. 
Let $s \in \mathbb{N}^+$ and $p \in (0,1]$. $\cmcp{}$ satisfies Lenient Concise Granularity.
 }
\end{proposition}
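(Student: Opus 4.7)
The plan is to unfold the definitions and reduce everything to elementary monotonicity facts about two real functions of a single non-negative integer. Let me write $a = \card{N(\Delta) \setminus N(\Gamma)}$ and $b = \card{N(\Delta') \setminus N(\Gamma)}$ throughout, as in the axioms. Then $\cmcd{}(E,D) = 1/(a+1)$ and $\cmcd{}(E,D') = 1/(b+1)$, while $\cmcp{}(E,D) = h(a)$ and $\cmcp{}(E,D') = h(b)$, where $h(x) = 1$ if $x \le s$ and $h(x) = \mymax(0, 1 - p(x-s))$ otherwise.

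For $\cmcd{}$, the function $x \mapsto 1/(x+1)$ is strictly decreasing on $\mathbb{N}$. Hence $a \le b$ gives $1/(a+1) \ge 1/(b+1)$, which is Lenient Concise Granularity, and $a < b$ gives $1/(a+1) > 1/(b+1)$, which is Strict Concise Granularity. This half of the proposition is essentially immediate.

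For $\cmcp{}$, I would show that $h$ is non-increasing on $\mathbb{N}$, which yields Lenient Concise Granularity. The verification splits into three cases according to the position of $s$ relative to $a \le b$: (i) if $b \le s$ then $h(a) = h(b) = 1$; (ii) if $a \le s < b$ then $h(a) = 1$ and $h(b) \le 1$; (iii) if $s < a \le b$ then $h(a) = \mymax(0,1 - p(a-s)) \ge \mymax(0,1 - p(b-s)) = h(b)$, because $p > 0$ and $a \le b$ imply $1 - p(a-s) \ge 1 - p(b-s)$, and $\mymax(0,\cdot)$ is monotone. In all cases $h(a) \ge h(b)$, so $\cmcp{}(E,D) \ge \cmcp{}(E,D')$.

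There is no genuine obstacle here; the only subtlety worth flagging is that $\cmcp{}$ fails Strict Concise Granularity, since any two values $a < b \le s$ produce $h(a) = h(b) = 1$ and violate the strict inequality, which is consistent with the statement not claiming strictness for $\cmcp{}$. No additional hypotheses on $N$, $\bL$, or the weight aggregator are used beyond the definitions already in force.
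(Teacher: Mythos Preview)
Your proof is correct and follows the same approach as the paper: both reduce the claim to the (strict or non-strict) monotonicity of the underlying functions $x \mapsto 1/(x+1)$ and $x \mapsto h(x)$ in the variable $\card{N(\Delta)\setminus N(\Gamma)}$. Your version is in fact more detailed than the paper's (which only states the key equivalence without the case split), and your side remark explaining why $\cmcp{}$ fails the strict axiom is a nice addition.
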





Next, we propose the dual versions of the previous criterion measures.

\begin{definition}\label{def:graDet}
\upshape{
Let $\bL = \tuple{\cW,{\nci},t}$ be a weighted logic and $N$ a normalization method on $\cW$.
Let $\cmdg{\bL N}$ be the criterion measure on $\bL$ called 
the \textbf{Detailed Divided $N$-Granularity}, i.e., 
$\forall~E = \tuple{\Gamma,\alpha} \in \ent, D = \tuple{\Delta,\beta} \in \aArg$, 
\begin{center}
$\cmdg{\bL N}(E,D) = 
1 - \dfrac{1}{\card{N(\Delta) \setminus N(\Gamma)} +1}.$
\end{center}

Next, let $s \in \mathbb{N}^+$ be a minimal detail size and
$p \in (0,1]$ a penalty score.
We denote by $\cmpg{\bL s p N}$ the criterion measure on $\bL$ called the
\textbf{Detailed $sp$-Penalty $N$-Granularity}, i.e.,
\begin{flalign*}
&\text{if } |N(\Delta) \setminus N(\Gamma)| \geq s,
\text{then } \cmpg{\bL s p N}(E,D) = 1;& \\
&\text{otherwise, } \cmpg{\bL s p N}(E,D) = &\\
&\mymax \big( 0, 1 - p \times (s - \card{N(\Delta) \setminus N(\Gamma)}) \big).&
\end{flalign*}
}
\end{definition}


\begin{example} (Cont. running ex.) 
\upshape{
    Let $\bL = \wLog$, 
    $N = \dn$, 
    $s = 1$ and $p = \frac{1}{2}$. We have:
    \begin{itemize}
        \item $\cmdg{}(E,D_1) = \frac{1}{2}$, and $\cmpg{}(E,D_1) = 1$;
        \vspace{0.2em}
        
        \item $\cmdg{}(E,D_2) = \frac{1}{2}$, and $\cmpg{}(E,D_2) = 1$;
        \vspace{0.2em}
        
        \item $\cmdg{}(E,D_3) = \frac{2}{3}$, and $\cmpg{}(E,D_3) = 1$.
    \end{itemize}
}
\end{example}

Let us analyze $\cmdg{}$ and $\cmpg{}$ with our axioms.

\begin{proposition}\label{prop:sat-dgran}
\upshape{
Let $\bL = \tuple{\cW,{\nci},t}$ be a weighted logic and $N$ a normalization method on $\cW$.
$\cmdg{}$ satisfies Lenient and Strict Detailed Granularity. 
Let $s \in \mathbb{N}^+$ and $p \in (0,1]$. $\cmpg{}$ satisfy Lenient Detailed Granularity.
 }
\end{proposition}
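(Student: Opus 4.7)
The plan is to verify the two axioms by direct monotonicity arguments on the underlying formulas, treating $\cmdg{}$ and $\cmpg{}$ separately. In both cases, fix $E = \tuple{\Gamma,\alpha} \in \ent$ and $D = \tuple{\Delta,\beta}, D' = \tuple{\Delta',\beta'} \in \aArg$, and write $a = \card{N(\Delta) \setminus N(\Gamma)}$ and $b = \card{N(\Delta') \setminus N(\Gamma)}$, so the axiom hypotheses (lenient / strict) become $a \geq b$ / $a > b$.

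For $\cmdg{}$, observe that $\cmdg{}(E,D) = g(a)$ and $\cmdg{}(E,D') = g(b)$, where $g(x) = 1 - \tfrac{1}{x+1}$ is defined on the non-negative integers. Since $g$ is strictly increasing on $\mathbb{N}$ (its derivative, viewed on $\mathbb{R}_{\geq 0}$, is $\tfrac{1}{(x+1)^2} > 0$, and a discrete check on successive integers gives strict monotonicity), $a \geq b$ immediately yields $\cmdg{}(E,D) \geq \cmdg{}(E,D')$, and $a > b$ yields the strict inequality. This gives both Lenient and Strict Detailed $N$-Granularity.

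For $\cmpg{}$, fix $s \in \mathbb{N}^+$ and $p \in (0,1]$, and consider the function $h(x) = 1$ if $x \geq s$, and $h(x) = \mymax(0, 1 - p(s-x))$ otherwise, so that $\cmpg{}(E,D) = h(a)$ and $\cmpg{}(E,D') = h(b)$. The key step is to check that $h$ is non-decreasing on $\mathbb{N}$: on $\{0,\ldots,s-1\}$ it is non-decreasing because $x \mapsto 1 - p(s-x)$ is increasing, and passing from $x = s-1$ to $x = s$ only increases the value to $1$, which is already the maximum value attained for $x \geq s$. Hence whenever $a \geq b$ we get $\cmpg{}(E,D) \geq \cmpg{}(E,D')$, establishing Lenient Detailed $N$-Granularity.

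Nothing further is claimed about $\cmpg{}$, and indeed the Strict version fails because of the plateau at $h(x) = 1$ for $x \geq s$ (and potentially at $h(x) = 0$ once $1 - p(s-x) \leq 0$): taking e.g.\ $a > b \geq s$ yields $\cmpg{}(E,D) = \cmpg{}(E,D') = 1$, so no strict inequality can hold. The only real obstacle in the whole argument is this plateau issue, which is avoided by the proposition restricting the claim for $\cmpg{}$ to its lenient form.
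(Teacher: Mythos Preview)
Your proof is correct and follows the same direct monotonicity approach as the paper: both arguments reduce the axioms to the observation that $\cmdg{}(E,D)$ (resp.\ $\cmpg{}(E,D)$) is a strictly increasing (resp.\ non-decreasing) function of $\card{N(\Delta)\setminus N(\Gamma)}$. Your version is more carefully spelled out than the paper's terse two-line proof, and your concluding remark explaining why the Strict axiom fails for $\cmpg{}$ is a helpful addition, but the underlying idea is the same.
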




\vspace{0.2cm}

\textit{Criterion measures of stability.} 
We propose a strict version discriminating all variations from the difference, and a more adaptable version encompassing intervals of difference as acceptable or unacceptable according to two thresholds.

\begin{definition}\label{def:stabC}
\upshape{
Let $\bL = \tuple{\cW,{\nci},t}$ be a weighted logic.
We denote by $\cmsd{\bL}$ the criterion measure on $\bL$ called the \textbf{Strict Difference Stability}, i.e.,
$\forall~E \in \ent, \forall D = \tuple{\Delta,\beta} \in \aArg$, the following holds:
\begin{flalign*}
& \text{if } \Delta = \emptyset \text{, then }\cmsd{\bL}(E,D) = 1;&\\
& \text{otherwise, }& \\
& \cmsd{\bL}(E,D) = 1 - \abso{\mymin[\Wei(\Delta)] - \Wei(\beta)}.&
\end{flalign*}

Next, let $a \in [0,1]$ be an {\it acceptable error} (with no impact) and $u \in (0,1]$ be an {\it unacceptable error} (nullifying the evaluation) such that $a < u$.

We denote by $\cmld{\bL a u}$ the criterion measure on $\bL$ called
the \textbf{Lenient $au$-Difference Stability}, i.e., $\forall~E \in \ent, \forall D = \tuple{\Delta,\beta} \in \aArg$,  where $Err = \abso{\mymin[\Wei(\Delta)] - \Wei(\beta)}$, the following holds:
\begin{flalign*}
&\text{if } \Delta = \emptyset \text{, then } \cmld{\bL a u}(E,D) = 1; & \\
&\text{if } \Delta \not= \emptyset \text{ and } Err \leq a, \text{ then } \cmld{\bL a u}(E,D) = 1; & \\
& \text{if } \Delta \not= \emptyset \text{ and } u \leq Err, \text{ then } \cmld{\bL a u}(E,D) = 0; & \\
& \text{if } \Delta \not= \emptyset \text{ and } a < Err < u,
& \\
& \text{then }\cmld{\bL a u}(E,D) = 1 - \dfrac{Err - a}{u - a}. & 
\end{flalign*}
}
\end{definition}

For $\cmld{}$, we propose to re-scale the difference according to the acceptable error (i.e., $a$) and unacceptable error (i.e., $u$) bounds.
This can be used if the user want to increase the importance of this criterion.

\begin{example} (Cont. running ex.) 
\upshape{
    Let $\bL = \wLog$, $a = 0$, $u = \frac{3}{10}$, $u' = \frac{1}{2}$. We have:
    \begin{itemize}
        \item $\cmsd{}(E,D_1) = 1$,  $~~~\cmld{u}(E,D_1) = 1$,  
        $~\cmld{u'}(E,D_1) = 1$;
        \vspace{0.2em}
        
        \item $\cmsd{}(E,D_2) = \frac{9}{10}$,\phantom{..}$\cmld{u}(E,D_2) = \frac{2}{3}$,  $\cmld{u'}(E,D_2) = \frac{4}{5}$;
        \vspace{0.2em}
        
        \item $\cmsd{}(E,D_3) = 1$,  $~~~\cmld{u}(E,D_3) = 1$,  
        $~\cmld{u'}(E,D_3) = 1$.
    \end{itemize}
}
\end{example}

We turn to our final axiomatic analysis of measures.

\begin{proposition}\label{prop:sat-wstab}
\upshape{
Let $\bL = \tuple{\cW,{\nci},t}$ be a weighted logic.
$\cmsd{}$ satisfies Ideal Stability, as Lenient and Strict Decreasing Stability. 
Let $a, u \in [0,1]$ such that $a < u $. $\cmld{}$ satisfies Ideal Stability and Lenient Decreasing Stability.
 }
\end{proposition}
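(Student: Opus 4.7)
The plan is to verify each claim by identifying the underlying monotone structure of the two measures as functions of the error quantity $Err = \abso{V(\Delta) - V(\beta)}$, where $V$ is the $\mymin$ weight aggregator used in $\wLog$. All four stability axioms are properties of this single scalar quantity, so it suffices to exhibit, for each measure, the function $h : [0,1] \to [0,1]$ such that $\bM(E, D)$ (whenever $\Delta \neq \emptyset$) equals $h(Err)$ and to check the monotonicity of $h$.

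First I would dispatch the corner case $\Delta = \emptyset$: both measures are defined to equal $1$ there, and $V(\emptyset)$ is undefined, so the hypotheses of each axiom are either vacuous or directly satisfied. Next, for $\cmsd{}$, note that $h_{\mathrm{sd}}(x) = 1 - x$ is strictly decreasing and continuous on $[0,1]$. Consequently (i) $V(\Delta) = V(\beta)$ gives $Err = 0$, hence $h_{\mathrm{sd}}(0) = 1$, establishing Ideal Stability; (ii) $Err_D \geq Err_{D'}$ implies $h_{\mathrm{sd}}(Err_D) \leq h_{\mathrm{sd}}(Err_{D'})$, yielding Lenient Decreasing Stability, and likewise the strict version under strict inequality. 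For $\cmld{}$, the corresponding function is the piecewise linear ramp $h_{\mathrm{ld}}(x) = 1$ on $[0, a]$, $h_{\mathrm{ld}}(x) = 1 - (x-a)/(u-a)$ on $(a, u)$, and $h_{\mathrm{ld}}(x) = 0$ on $[u, 1]$. This function is weakly (but not strictly) decreasing, so Ideal Stability follows from $h_{\mathrm{ld}}(0) = 1$ (since $0 \leq a$) and Lenient Decreasing Stability follows from the weak monotonicity. Strict Decreasing Stability fails for $\cmld{}$ because $h_{\mathrm{ld}}$ is constant on $[0,a]$ and on $[u,1]$, which is why the proposition omits it, though the plan does not need to exhibit the counterexample to prove the stated positive claims.

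The only mildly delicate step is making the reduction to the scalar $Err$ rigorous in the presence of the empty-premise clauses, and ensuring that the monotonicity argument for the piecewise $h_{\mathrm{ld}}$ covers every combination of intervals into which $Err_D$ and $Err_{D'}$ can fall. The latter is a short case analysis: one verifies $h_{\mathrm{ld}}(x) \leq h_{\mathrm{ld}}(y)$ whenever $x \geq y$ by splitting on whether $x, y$ lie in $[0,a]$, $(a,u)$, or $[u,1]$. Once this is done, all the positive monotonicity statements in the proposition are immediate corollaries of the corresponding facts about $h_{\mathrm{sd}}$ and $h_{\mathrm{ld}}$.
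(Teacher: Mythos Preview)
Your proposal is correct and follows essentially the same approach as the paper's proof: both reduce each measure to a function of the scalar error $Err = \abso{\mymin[\Wei(\Delta)] - \Wei(\beta)}$ and then invoke (strict or weak) monotonicity of that function. Your presentation is somewhat more explicit---naming the functions $h_{\mathrm{sd}}$ and $h_{\mathrm{ld}}$ and flagging the empty-premise corner case and the piecewise case split for $h_{\mathrm{ld}}$---whereas the paper's proof simply asserts the monotonicity directly, but the underlying argument is identical.
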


\section{Quality Measure}

Criterion measures look at different aspects of the quality of a decoding of an enthymeme.
In order to get a better understanding of the quality of a decoding, we will use multiple criterion measures, each giving a value, and then we combine those values to give a single quality measure. 

An {\bf aggregation function} is a function $\F:[0,1]^n \rightarrow [0,1]$, where $n \in \mathbb{N}$, which aggregates a sequence of values into a single one.

\begin{definition}
\upshape{
Let $\bL = \langle \cW,{\nci},t \rangle$ be a weighted logic, $\E = \tuple{\bM_1,\ldots,\bM_k}$ a sequence of criterion measures on $\bL$, and $\F$ an aggregation function.
We denote by $\qual^\E_\F$ the {\bf quality measure} based on $\E$ and $\F$, i.e., the function on $\ent \times \aArg$ such that, $\fa E \in \ent$, $\fa D \in \aArg$, the following holds:
\begin{center}
    $\qual^\E_\F(E,D) = \F\langle v_1, \ldots, v_k \rangle,$ where \\
    \vspace{0.3em}
    $\bM_1(E,D) = v_1, \ldots, \bM_k(E,D) = v_k$.
\end{center}
}
\end{definition}

Let see some specific examples of aggregation function.

\begin{definition}\label{def:agg}
\upshape{
Let a sequence $T = \langle v_1, \ldots, v_k \rangle$ where each $v_i \in [0,1]$.  The following \textbf{aggregation functions} 
$\m{F}^\m{av}$, and $\m{F}^\m{pr}$ are defined as follows:
\begin{itemize}
\item
if $\card{T} = 0$, then $\m{F}^\m{av}(T) = 0$, else
 $\m{F}^\m{av}(T) = 
 \frac{\sum_{i=1}^{\card{T}} T[i]}{\card{T}}$   
\item 
if $\card{T} = 0$, then $\m{F}^\m{pr}(T) = 0$, else
    $\m{F}^\m{pr}(T) = \prod_{i=1}^{\card{T}} T[i]$
\end{itemize}
}
\end{definition}

Let us see now two examples of set of criteria.




\begin{definition}
\upshape{
Let 
the \textbf{Lenient detailed} $\m{Ld}$ and the \textbf{Strict detailed} $\m{Sd}$, sequence of criterion measures, defined as:
\begin{itemize}
    \item 
    $\m{Ld} = \langle$
    $\psco{1},$
    $\ppi{1},$
    $\cmpen{\frac{1}{4}},$
    $\cmbl{},$
    $\cmtve{\m{and}},$
    $\cmld{0,\frac{1}{2}},$
    $\cmpg{1,\frac{1}{2}}\rangle$

\item 
    $\m{Sd} = \langle$
    $\pwco{1},$
    $\dpi{0},$
    $\cmmin,$
    $~\cmbl{},$ 
    $\cmtve{\m{ss2}},$
    $\cmsd{},$
    $~~\cmdg{}\rangle$
\end{itemize}

}
\end{definition}






Let us motivate $\m{Ld}$ with examples of practical applications: i) lenient criteria may be desirable to analyse the scope of an enthymeme, in particular in politics where the aim is to be favourably decoded by as many people as possible;
ii) detailed granularity criterion may be more useful than the concise one or the similarity criterion, e.g., in an expert context, if the goal is to understand and thus add all the precision of the reasoning.
Similar justifications can be found for $\m{Sd}$.

Let us continue with our running example, and study the best decoding (according to different criteria and aggregations) for the enthymeme $E$ that explains why Bob is happy.


    \begin{center}
    \resizebox{0.44\textwidth}{!}{ 
    $\begin{array}{lllc}
    \hline
        \quality^{\m{Ld}}_{\m{av}}(E,D_1)= & \m{F}^\m{av}( 1,1,1,1,~\frac{1}{1.5},~~~1,1 ) & \approx & \textbf{0.952};\\
        \quality^{\m{Ld}}_{\m{av}}(E,D_2)= & \m{F}^\m{av}( 1,1,1,1,\frac{2}{2.375},\frac{4}{5} ,1 ) & \approx & 0.949;  \\
        \quality^{\m{Ld}}_{\m{av}}(E,D_3)= & \m{F}^\m{av}( 1,1,\frac{3}{4},1,\frac{1}{1.625},1,1 ) & \approx & 0.909. \\
    \hline
    
        
    \hline
    \quality^{\m{Ld}}_{\m{pr}}(E,D_1)= & \m{F}^\m{pr}(1,1,1,1,~~\frac{1}{1.5},~~1,1 ) & \approx & 0.667; \\
    \quality^{\m{Ld}}_{\m{pr}}(E,D_2)= & \m{F}^\m{pr}(1,1,1,1,\frac{2}{2.375},\frac{4}{5},1 ) & \approx & \textbf{0.674}; \\
    \quality^{\m{Ld}}_{\m{pr}}(E,D_3)= & \m{F}^\m{pr}( 1,1,\frac{3}{4},1,\frac{1}{1.625},1,1 ) & \approx & 0.462. \\
    \hline
    \end{array}$
    }
    \end{center}
    

    \begin{center}
    \resizebox{0.44\textwidth}{!}{ 
    $\begin{array}{lllc}
    \hline
    \quality^{\m{Sd}}_{\m{av}}(E,D_1)= & \m{F}^\m{av}( 1,1,1,1,~~~\frac{1}{9},~~~1,~\frac{1}{2}  ) & \approx & \textbf{0.802}; \\
    \quality^{\m{Sd}}_{\m{av}}(E,D_2)= & \m{F}^\m{av}( 1,0,1,1,~~~\frac{2}{8},~~\frac{9}{10},\frac{1}{2}  ) & \approx & 0.664; \\
    \quality^{\m{Sd}}_{\m{av}}(E,D_3)= & \m{F}^\m{av}( 0,1,\frac{1}{2} ,1,~~\frac{1}{11},~~1,~\frac{2}{3} ) & \approx & 0.608. \\
    \hline
        
    \quality^{\m{Sd}}_{\m{pr}}(E,D_1)= & \m{F}^\m{pr}( 1,1,1,1,~~~\frac{1}{9},~~~1,~\frac{1}{2} ) & \approx & \textbf{0.056}; \\
    \quality^{\m{Sd}}_{\m{pr}}(E,D_2)= & \m{F}^\m{pr}( 1,0,1,1,~~~\frac{2}{8},~~\frac{9}{10},\frac{1}{2}  ) & = & 0; \\
    \quality^{\m{Sd}}_{\m{pr}}(E,D_3)= & \m{F}^\m{pr}( 0,1,\frac{1}{2} ,1,~~\frac{1}{11},~~1,~\frac{2}{3} ) & = & 0. \\
    \hline
    \end{array}$
    }
    \end{center}

To begin with, let us note that there are two possible goals with the output of a quality measure: i) to extract the k-best decodings using the ranking or ii) to extract the ``acceptable" decodings using the numerical values with a threshold.

To extract the best decoding, we can see in bold that according to $\quality^{\m{Ld}}_{\m{av}}$, $D_1$ (a researcher is generally happy) is first, with a better stability score, i.e. the weights of the supports of $D_1$ ($\min = 0.7$) are more appropriate to infer the claim ($\min = 0.7$) than those of $D_2$ ($\min = 0.8$). 
For $\quality^{\m{Ld}}_{\m{pr}}$,
$D_2$ (Bob is loved and often being loved makes people happy) obtains a better score than $D_1$ thanks to a better similarity score and an higher product between similarity and stability values ($\frac{2}{2. 375} \times \frac{4}{5} > \frac{1}{1.5} \times 1$). 
For the quality measures using the strict detailed criteria, $\quality^{\m{Sd}}_{\m{av}}$ and $\quality^{\m{Sd}}_{\m{pr}}$, $D_1$ is the highest scored. 
However, now, if we want to extract the ``acceptable" decodings according to a threshold (e.g., $0.5$), then with $\quality^{\m{Sd}}_{\m{av}}$ the 3 decodings are selected whereas for $\quality^{\m{Sd}}_{\m{pr}}$ no decoding is ``acceptable".
This example shows that for the same set of criteria, aggregation can modify the ranking or drastically change the values.

\section{Conclusion}

Enthymemes are an omnipresent phenomenon, and 
to build systems that can understand them, we need methods to measure the quality of decodings, and thereby optimize the choice of decodings. 
This paper introduces an unexplored research question on the evaluation of enthymeme decoding. 
We propose a generic approach
accepting any weighted logics with an axiomatic framework. 
We investigate 
different quality measures based on aggregation functions and criterion measures, analysed to ensure desirable behaviour. 

To extend our proposal, a formal study of the properties of these quality measures 
is required to guarantee and explain their overall operation. 
The choice of criteria 
can be defined by a user in a context, 
but the numerical parameterisation of these measures and aggregations is not straightforward.
Fortunately, a solution is to learn these configurations from examples. 
Finally, relying on advances in translation of text into logic and the growth of knowledge graphs (interpretable as logical formulae), we plan to apply these quality measures to optimize the generation of decoding from  practical data.

\section*{Acknowledgments}
This work was supported by the French government, managed by the Agence Nationale de la Recherche under the Plan d'Investissement France 2030, as part of the Initiative d'Excellence d'Université Côte d'Azur under the reference ANR-15-IDEX-01.


\bibliography{biblio}

\newcommand{\etalchar}[1]{$^{#1}$}
\begin{thebibliography}{XHMB20}

\bibitem[AD18]{AmgoudD18}
Leila Amgoud and Victor David.
\newblock Measuring similarity between logical arguments.
\newblock In {\em Proc. of KR}, pages 98--107, 2018.

\bibitem[AD21]{amgoud2021similarity}
Leila Amgoud and Victor David.
\newblock Similarity measures based on compiled arguments.
\newblock In {\em Proc. of ECSQARU}, pages 32--44, 2021.

\bibitem[ADD19]{AmgoudDD19}
Leila Amgoud, Victor David, and Dragan Doder.
\newblock Similarity measures between arguments revisited.
\newblock In {\em Proc. of ECSQARU}, pages 3--13, 2019.

\bibitem[And73]{Anderberg}
Michael~R Anderberg.
\newblock {\em Cluster analysis for applications. Probability and Mathematical Statistics: A Series of Monographs and Textbooks}.
\newblock Academic Press, Inc., New York, 1973.

\bibitem[BH01]{BH01}
Philippe Besnard and Anthony Hunter.
\newblock A logic-based theory of deductive arguments.
\newblock {\em Artificial Intelligence}, 128:203--235, 2001.

\bibitem[BH12]{black2012relevance}
Elizabeth Black and Anthony Hunter.
\newblock A relevance-theoretic framework for constructing and deconstructing enthymemes.
\newblock {\em Journal of Logic and Computation}, 22:55--78, 2012.

\bibitem[BNDH24]{ben2024understanding}
Jonathan Ben-Naim, Victor David, and Anthony Hunter.
\newblock Understanding enthymemes in argument maps: Bridging argument mining and logic-based argumentation.
\newblock {\em arXiv preprint arXiv:2408.08648}, 2024.

\bibitem[CCS{\etalchar{+}}19]{chen2019embedding}
Xuelu Chen, Muhao Chen, Weijia Shi, Yizhou Sun, and Carlo Zaniolo.
\newblock Embedding uncertain knowledge graphs.
\newblock In {\em Proc. of AAAI}, volume~33, pages 3363--3370, 2019.

\bibitem[CJL{\etalchar{+}}22]{chen2022rule}
Lihan Chen, Sihang Jiang, Jingping Liu, Chao Wang, Sheng Zhang, Chenhao Xie, Jiaqing Liang, Yanghua Xiao, and Rui Song.
\newblock Rule mining over knowledge graphs via reinforcement learning.
\newblock {\em Knowledge-Based Systems}, 242:108371, 2022.

\bibitem[Dav21]{david2021dealing}
Victor David.
\newblock {\em Dealing with Similarity in Argumentation}.
\newblock PhD thesis, Universit{\'e} Paul Sabatier-Toulouse III, 2021.

\bibitem[DDM23]{davidsimilarity23}
Victor David, Jerome Delobelle, and Jean-Guy Mailly.
\newblock Similarity measures between order-sorted logical arguments.
\newblock In {\em Proc. of JIAF}, 2023.

\bibitem[DdS11]{Dupin2011}
Florence Dupin~de Saint{-}Cyr.
\newblock Handling enthymemes in time-limited persuasion dialogs.
\newblock In {\em Proc. of SUM}, volume 6929, pages 149--162. Springer, 2011.

\bibitem[DFST23]{david2023neomapy}
Victor David, Rapha{\"e}l Fournier-S'Niehotta, and Nicolas Travers.
\newblock Neomapy: A parametric framework for reasoning with map inference on temporal markov logic networks.
\newblock In {\em Proc. of CIKM}, pages 400--409, 2023.

\bibitem[Dic45]{Dice}
Lee~R Dice.
\newblock Measures of the amount of ecologic association between species.
\newblock {\em Ecology}, 26(3):297--302, 1945.

\bibitem[DM02]{darwiche2002knowledge}
Adnan Darwiche and Pierre Marquis.
\newblock A knowledge compilation map.
\newblock {\em Journal of Artificial Intelligence Research}, 17:229--264, 2002.

\bibitem[Fau10]{faure2010rhetoric}
Murray Faure.
\newblock Rhetoric and persuasion: Understanding enthymemes in the public sphere.
\newblock {\em Acta Academica}, 2010.

\bibitem[Gri75]{grice1975logic}
Herbert~P Grice.
\newblock Logic and conversation.
\newblock In {\em Speech acts}, pages 41--58. Brill, 1975.

\bibitem[GTHS15]{galarraga2015fast}
Luis Gal{\'a}rraga, Christina Teflioudi, Katja Hose, and Fabian~M Suchanek.
\newblock Fast rule mining in ontological knowledge bases with amie+.
\newblock {\em The VLDB Journal}, 24(6):707--730, 2015.

\bibitem[HMR14]{Hosseini2014}
Seyed~Ali Hosseini, Sanjay Modgil, and Odinaldo Rodrigues.
\newblock Enthymeme construction in dialogues using shared knowledge.
\newblock In {\em Proc. of COMMA}, volume 266 of {\em FAIA}, pages 325--332. {IOS} Press, 2014.

\bibitem[HT23]{hahn2023argument}
Ulrike Hahn and Marko Te{\v{s}}i{\'c}.
\newblock Argument and explanation.
\newblock {\em Philosophical Transactions of the Royal Society A}, 2023.

\bibitem[Hun07]{hunter2007real}
Anthony Hunter.
\newblock Real arguments are approximate arguments.
\newblock In {\em Proc. of AAAI}, volume~7, pages 66--71, 2007.

\bibitem[Hun22]{hunter2022understanding}
Anthony Hunter.
\newblock Understanding enthymemes in deductive argumentation using semantic distance measures.
\newblock In {\em Proc. of AAAI}, volume~36, pages 5729--5736, 2022.

\bibitem[HWGS17]{habernal2017argument}
Ivan Habernal, Henning Wachsmuth, Iryna Gurevych, and Benno Stein.
\newblock The argument reasoning comprehension task: Identification and reconstruction of implicit warrants.
\newblock {\em arXiv preprint arXiv:1708.01425}, 2017.

\bibitem[Jac01]{Jaccard}
Paul Jaccard.
\newblock Nouvelles recherches sur la distributions florale.
\newblock {\em Bulletin de la Soci\'et\'e Vaudoise des Sciences Naturelles}, 37:223--270, 1901.

\bibitem[LGG23]{Leiva2023}
Diego S.~Orbe Leiva, Sebastian Gottifredi, and Alejandro~Javier Garc{\'{\i}}a.
\newblock Automatic knowledge generation for a persuasion dialogue system with enthymemes.
\newblock {\em Int. J. Approx. Reason.}, 160:108963, 2023.

\bibitem[OWW19]{omran2019embedding}
Pouya~Ghiasnezhad Omran, Kewen Wang, and Zhe Wang.
\newblock An embedding-based approach to rule learning in knowledge graphs.
\newblock {\em IEEE Transactions on Knowledge and Data Engineering}, 33(4):1348--1359, 2019.

\bibitem[PMB22]{Panisson2022}
Alison~R. Panisson, Peter McBurney, and Rafael~H. Bordini.
\newblock Towards an enthymeme-based communication framework in multi-agent systems.
\newblock In Gabriele Kern{-}Isberner, Gerhard Lakemeyer, and Thomas Meyer, editors, {\em Proc. of KR}, 2022.

\bibitem[SF20]{sokol2020explainability}
Kacper Sokol and Peter Flach.
\newblock Explainability fact sheets: A framework for systematic assessment of explainable approaches.
\newblock In {\em Proc. of conference on fairness, accountability, and transparency}, pages 56--67, 2020.

\bibitem[SIM{\etalchar{+}}22]{singh2022irac}
Keshav Singh, Naoya Inoue, Farjana~Sultana Mim, Shoichi Naito, and Kentaro Inui.
\newblock Irac: A domain-specific annotated corpus of implicit reasoning in arguments.
\newblock In {\em Proc. of Language Resources and Evaluation Conference}, pages 4674--4683, 2022.

\bibitem[SL92]{SL92}
{Guillermo Ricardo} Simari and {Ronald Prescott} Loui.
\newblock A mathematical treatment of defeasible reasoning and its implementation.
\newblock {\em Artificial Intelligence}, 53(2-3):125--157, 1992.

\bibitem[S{\o}r48]{Sorensen}
Thorvald S{\o}rensen.
\newblock A method of establishing groups of equal amplitude in plant sociology based on similarity of species and its application to analyses of the vegetation on danish commons.
\newblock {\em Biologiske Skrifter}, 5:1--34, 1948.

\bibitem[SS{\etalchar{+}}73]{Sneath}
Peter~HA Sneath, Robert~R Sokal, et~al.
\newblock {\em Numerical taxonomy. The Principles and Practice of Numerical Classification}.
\newblock 1973.

\bibitem[Tve77]{Tversky77}
A.~Tversky.
\newblock Features of similarity.
\newblock {\em Psychological Review}, 84(4):327--352, 1977.

\bibitem[WSZ{\etalchar{+}}22]{wei2022implicit}
Kaiwen Wei, Xian Sun, Zequn Zhang, Li~Jin, Jingyuan Zhang, Jianwei Lv, and Zhi Guo.
\newblock Implicit event argument extraction with argument-argument relational knowledge.
\newblock {\em IEEE Transactions on Knowledge and Data Engineering}, 2022.

\bibitem[XHMB20]{Xydis2020}
Andreas Xydis, Christopher Hampson, Sanjay Modgil, and Elizabeth Black.
\newblock Enthymemes in dialogues.
\newblock In {\em Proc. of COMMA}, pages 395--402, 2020.

\end{thebibliography}
\bibliographystyle{alpha}

\newpage

\section{Appendix: Proofs}

\begin{proof}[\upshape{\textbf{Proof (Proposition \ref{prop:arg})}}]
Let $D = \langle\Delta,\beta\rangle \in \Arg$. 
So $\Delta$ is consistent,
$\Delta{\nci}\beta$ holds, 
and there is no $\Delta'\subset\Delta$ s.t. $\Delta'{\nci}\beta$ holds.
So $\bM(E,D)= \bM'(E,D) = \bM''(E,D) = 1$. 
\end{proof}

\begin{proof}[\upshape{\textbf{Proof (Proposition \ref{prop:sat-coh})}}]
Let $\bL = \tuple{\cW,{\nci},t}$ be a weighted logic, $\forall~E \in \ent$, $\fa D = \la \Delta, \beta \ra, D' = \la \Delta', \beta \ra \in \aArg$, let 
$\mathtt{Nb\_SInc}(E,D) = {\mid\{} \Phi \subseteq \Delta : \Phi \in \Inc , \nexists \Psi \subset \Phi \text{ s.t. }\Psi \in \Inc {\}\mid}$, and 
$\mathtt{Nb\_WInc}(E,D) = {\mid\{} \Phi \subseteq \Delta \cup \Gamma : \Phi \in \Inc , \nexists \Psi \subset \Phi \text{ s.t. }\Psi \in \Inc {\}\mid}$.\\

($\psco{}$) For any $p \in (0,1]$:
\begin{itemize}
    \item if $\Delta$ (resp. $\Delta \cup \Gamma$)  is consistent then $\mathtt{Nb\_SInc}(E,D) = 0$, i.e., ${\mathtt{M}_{\bL p}^\mathtt{psc}}(E,D) = \mymax \big(0, 1 - p \times 0 \big) = 1$ (satisfaction of the axiom Ideal Strong (resp. Weak) Coherence).
    \item if ${\mid\{} \Phi \subseteq \Delta$ (resp. $\Delta \cup \Gamma$) $: \Phi \in \Inc \text{ and } \nexists \Psi \subset \Phi \text{ s.t. } \Psi \in \Inc {\}\mid} \geq 
    {\mid\{} \Phi' \subseteq \Delta '$ (resp. $\Delta' \cup \Gamma$) $ : \Phi' \in \Inc  \text{ and }$   $\nexists \Psi' \subset \Phi' \text{ s.t. }$ $\Psi' \in \Inc  {\}\mid}$  then $\mathtt{Nb\_SInc}(E,D) \geq \mathtt{Nb\_SInc}(E,D')$, i.e., 
    ${\mathtt{M}_{\bL p}^\mathtt{psc}}(E,D) = \mymax \big(0, 1 - p \times \mathtt{Nb\_SInc}(E,D) \big) \leq {\mathtt{M}_{\bL p}^\mathtt{psc}}(E,D') = \mymax \big(0, 1 - p \times \mathtt{Nb\_SInc}(E,D') \big)$ 
    (satisfaction of the axioms Lenient Decreasing Strong and Weak Coherence).
\end{itemize}

($\pwco{}$) For any $p \in (0,1]$:
\begin{itemize}
    \item if $\Delta \cup \Gamma$ is consistent then $\mathtt{Nb\_WInc}(E,D) = 0$, i.e., ${\mathtt{M}_{\bL p}^\mathtt{pwc}}(E,D) = \mymax \big(0, 1 - p \times 0 \big) = 1$ (satisfaction of the axiom Ideal Weak Coherence).
    \item if ${\mid\{} \Phi \subseteq \Delta \cup \Gamma : \Phi \in \Inc \text{ and } \nexists \Psi \subset \Phi \text{ s.t. } \Psi \in \Inc {\}\mid} \geq 
    {\mid\{} \Phi' \subseteq \Delta' \cup \Gamma : \Phi' \in \Inc  \text{ and }$   $\nexists \Psi' \subset \Phi' \text{ s.t. }$ $\Psi' \in \Inc  {\}\mid}$  then $\mathtt{Nb\_WInc}(E,D) \geq \mathtt{Nb\_WInc}(E,D')$, i.e., 
    ${\mathtt{M}_{\bL p}^\mathtt{pwc}}(E,D) = \mymax \big(0, 1 - p \times \mathtt{Nb\_WInc}(E,D) \big) \leq {\mathtt{M}_{\bL p}^\mathtt{pwc}}(E,D') = \mymax \big(0, 1 - p \times \mathtt{Nb\_WInc}(E,D') \big)$ 
    (satisfaction of the axioms Lenient Decreasing Weak Coherence).
\end{itemize}

($\dwco{}$):
\begin{itemize}
    \item if $\Delta \cup \Gamma$ is consistent then $\mathtt{Nb\_WInc}(E,D) = 0$, i.e., ${\mathtt{M}_{\bL}^\mathtt{dwc}}(E,D) = \dfrac{1}{1 + 0} = 1$ (satisfaction of the axiom Ideal Weak Coherence).
    \item if ${\mid\{} \Phi \subseteq \Delta \cup \Gamma : \Phi \in \Inc \text{ and } \nexists \Psi \subset \Phi \text{ s.t. } \Psi \in \Inc {\}\mid} \geq$ (resp. $>$) 
    ${\mid\{} \Phi' \subseteq \Delta' \cup \Gamma : \Phi' \in \Inc  \text{ and }$   $\nexists \Psi' \subset \Phi' \text{ s.t. }$ $\Psi' \in \Inc  {\}\mid}$  then $\mathtt{Nb\_WInc}(E,D) \geq $ (resp. $>$) $ \mathtt{Nb\_WInc}(E,D')$, i.e., 
    ${\mathtt{M}_{\bL}^\mathtt{dwc}}(E,D) = \dfrac{1}{1 + \mathtt{Nb\_WInc}(E,D)} \leq$ (resp. $<$) ${\mathtt{M}_{\bL}^\mathtt{dwc}}(E,D') = \dfrac{1}{1 + \mathtt{Nb\_WInc}(E,D')}$  
    (satisfaction of the axioms Lenient (resp. Strict) Decreasing Weak Coherence).
\end{itemize}

($\dsco{}$):
\begin{itemize}
    \item if $\Delta$ (resp. $\Delta \cup \Gamma$) is consistent then $\mathtt{Nb\_SInc}(E,D) = 0$, i.e., ${\mathtt{M}_{\bL}^\mathtt{dsc}}(E,D) = \dfrac{1}{1 + 0} = 1$ (satisfaction of the axiom Ideal Strong (resp. Weak) Coherence).
    \item if ${\mid\{} \Phi \subseteq \Delta $ (resp. $\Delta \cup \Gamma$) $: \Phi \in \Inc \text{ and } \nexists \Psi \subset \Phi \text{ s.t. } \Psi \in \Inc {\}\mid} \geq$ (resp. $>$) 
    ${\mid\{} \Phi' \subseteq \Delta'$ (resp. $\Delta' \cup \Gamma$) $: \Phi' \in \Inc  \text{ and }$   $\nexists \Psi' \subset \Phi' \text{ s.t. }$ $\Psi' \in \Inc  {\}\mid}$  then $\mathtt{Nb\_WInc}(E,D) \geq $ (resp. $>$) $ \mathtt{Nb\_WInc}(E,D')$, i.e., 
    ${\mathtt{M}_{\bL}^\mathtt{dsc}}(E,D) = \dfrac{1}{1 + \mathtt{Nb\_SInc}(E,D)} \leq$ (resp. $<$) ${\mathtt{M}_{\bL}^\mathtt{dsc}}(E,D') = \dfrac{1}{1 + \mathtt{Nb\_SInc}(E,D')}$  
    (satisfaction of the axioms Lenient (resp. Strict) Decreasing Weak (resp. Strong) Coherence).
\end{itemize}

We can also add that the satisfaction of the Strong Coherence axioms (i.e. $a \rightarrow c$) implies the satisfaction of the Weak Coherence axioms (i.e. $(a \wedge b) \rightarrow c$), given that the condition ``$a \wedge b = \Delta \cup \Gamma$ is consistent" implies ``$a = \Delta$ is consistent", as illustrated in the following example: $(a \wedge b) \vdash a$ and $(a \rightarrow c) \vdash (a \wedge b) \rightarrow c$.

\end{proof}

\begin{proof}[\upshape{\textbf{Proof (Proposition \ref{prop:sat-inf})}}]
Let $\bL = \tuple{\cW,{\nci},t}$ be a weighted logic, $N$ a normalization method on $\cW$, $a \in [0,1]$ be an \textit{acceptable error}, and $V$ be the weight aggregator used in $\bL$. 
Let $E \in \ent$, and $D = \tuple{\Delta,\beta}, D' = \tuple{\Delta',\beta} \in \aArg$.

($\ppi{1}$), we have $\abso{V(\Delta) - V(\beta)} \leq 1$, and:
\begin{itemize}
    \item if $\Flat(\Delta) \vdash \Flat(\beta)$, then $\card{\fCN{N}(\beta) \setminus \fCN{N}(\Delta)} = 0$, therefore 
    $\ppi{1}(E,D) = \mymax \big(0 , 1 - p \times 0 \big) = 1$ (satisfaction of the axiom Ideal Flat Inference)

    \item if $\Delta {\nci} \beta$, then $\card{\fCN{N}(\beta) \setminus \fCN{N}(\Delta)} = 0$, therefore 
    $\ppi{1}(E,D) = \mymax \big(0 , 1 - p \times 0 \big) = 1$ (satisfaction of the axiom Ideal Weighted Inference)

    \item if $\card{\{ f : \Flat(\Delta) \vdash f \text{ and } \Flat(\beta) \vdash f \}} ~ \geq \card{\{ f : \Flat(\Delta') \vdash f \text{ and } \Flat(\beta) \vdash f \}} $, then $\card{\fCN{N}(\beta) \setminus \fCN{N}(\Delta)} \leq \card{\fCN{N}(\beta) \setminus \fCN{N}(\Delta')}$, therefore 
    $\ppi{1}(E,D) \geq \ppi{1}(E,D')$ (satisfaction of the axiom Lenient Increasing Flat Inference)

    \item if $\card{\{ \alpha : \Delta {\nci} \alpha \text{ and } \beta {\nci} \alpha \}} \geq  \card{\{ \alpha : \Delta' {\nci} \alpha \text{ and } \beta {\nci} \alpha \}}$, then $\card{\fCN{N}(\beta) \setminus \fCN{N}(\Delta)} \leq \card{\fCN{N}(\beta) \setminus \fCN{N}(\Delta')}$, therefore 
    $\ppi{1}(E,D) \geq \ppi{1}(E,D')$ (satisfaction of the axiom Lenient Increasing Weighted Inference)
\end{itemize}

($\ppi{<1}$):
\begin{itemize}
    \item if $\Delta {\nci} \beta$, then $\abso{V(\Delta) - V(\beta)} = 0 \leq a$ and $\card{\fCN{N}(\beta) \setminus \fCN{N}(\Delta)} = 0$, therefore 
    $\ppi{1}(E,D) = \mymax \big(0 , 1 - p \times 0 \big) = 1$ (satisfaction of the axiom Ideal Weighted Inference)

    \item if $\card{\{ \alpha : \Delta {\nci} \alpha \text{ and } \beta {\nci} \alpha \}} \geq  \card{\{ \alpha : \Delta' {\nci} \alpha \text{ and } \beta {\nci} \alpha \}}$, then $\abso{V(\Delta) - V(\beta)} = 0 \leq a$ and $\card{\fCN{N}(\beta) \setminus \fCN{N}(\Delta)} \leq \card{\fCN{N}(\beta) \setminus \fCN{N}(\Delta')}$, therefore 
    $\ppi{1}(E,D) \geq \ppi{1}(E,D')$ (satisfaction of the axiom Lenient Increasing Weighted Inference)
\end{itemize}

($\dpi{<1}$):
\begin{itemize}
    \item if $\Delta {\nci} \beta$, then $\abso{V(\Delta) - V(\beta)} = 0 \leq a$ and $\card{\fCN{N}(\beta) \setminus \fCN{N}(\Delta)} = 0$, therefore 
    $\dpi{1}(E,D) = \dfrac{\card{\fCN{N}(\beta)}} {\card{\fCN{N}(\beta)} + 0} = 1$ (satisfaction of the axiom Ideal Weighted Inference)

    \item if $\card{\{ \alpha : \Delta {\nci} \alpha \text{ and } \beta {\nci} \alpha \}} \geq  \card{\{ \alpha : \Delta' {\nci} \alpha \text{ and } \beta {\nci} \alpha \}}$, then $\abso{V(\Delta) - V(\beta)} = 0 \leq a$ and $\card{\fCN{N}(\beta) \setminus \fCN{N}(\Delta)} \leq \card{\fCN{N}(\beta) \setminus \fCN{N}(\Delta')}$, therefore 
    $\dpi{1}(E,D) \geq \dpi{1}(E,D')$ (satisfaction of the axiom Lenient Increasing Weighted Inference)

    \item if $\card{\{ \alpha : \Delta {\nci} \alpha \text{ and } \beta {\nci} \alpha \}} >  \card{\{ \alpha : \Delta' {\nci} \alpha \text{ and } \beta {\nci} \alpha \}}$, then $\abso{V(\Delta) - V(\beta)} = 0 \leq a$ and $\card{\fCN{N}(\beta) \setminus \fCN{N}(\Delta)} < \card{\fCN{N}(\beta) \setminus \fCN{N}(\Delta')}$, therefore 
    $\dpi{1}(E,D) > \dpi{1}(E,D')$ (satisfaction of the axiom Strict Increasing Weighted Inference)
\end{itemize}

($\dpi{1}$), we have $\abso{V(\Delta) - V(\beta)} \leq 1$, and:
\begin{itemize}
    \item if $\Flat(\Delta) \vdash \Flat(\beta)$, then $\card{\fCN{N}(\beta) \setminus \fCN{N}(\Delta)} = 0$, therefore 
    $\dpi{1}(E,D) = \dfrac{\card{\fCN{N}(\beta)}} {\card{\fCN{N}(\beta)} + 0} = 1$ (satisfaction of the axiom Ideal Flat Inference)

    \item if $\Delta {\nci} \beta$, then $\card{\fCN{N}(\beta) \setminus \fCN{N}(\Delta)} = 0$, therefore 
    $\dpi{1}(E,D) = \dfrac{\card{\fCN{N}(\beta)}} {\card{\fCN{N}(\beta)} + 0} = 1$ (satisfaction of the axiom Ideal Weighted Inference)

    \item if $\card{\{ f : \Flat(\Delta) \vdash f \text{ and } \Flat(\beta) \vdash f \}} ~ \geq \card{\{ f : \Flat(\Delta') \vdash f \text{ and } \Flat(\beta) \vdash f \}} $, then $\card{\fCN{N}(\beta) \setminus \fCN{N}(\Delta)} \leq \card{\fCN{N}(\beta) \setminus \fCN{N}(\Delta')}$, therefore 
    $\dpi{1}(E,D) \geq \dpi{1}(E,D')$ (satisfaction of the axiom Lenient Increasing Flat Inference)

    \item if $\card{\{ \alpha : \Delta {\nci} \alpha \text{ and } \beta {\nci} \alpha \}} \geq  \card{\{ \alpha : \Delta' {\nci} \alpha \text{ and } \beta {\nci} \alpha \}}$, then $\card{\fCN{N}(\beta) \setminus \fCN{N}(\Delta)} \leq \card{\fCN{N}(\beta) \setminus \fCN{N}(\Delta')}$, therefore 
    $\dpi{1}(E,D) \geq \dpi{1}(E,D')$ (satisfaction of the axiom Lenient Increasing Weighted Inference)

    \item if $\card{\{ f : \Flat(\Delta) \vdash f \text{ and } \Flat(\beta) \vdash f \}} ~ > \card{\{ f : \Flat(\Delta') \vdash f \text{ and } \Flat(\beta) \vdash f \}} $, then $\card{\fCN{N}(\beta) \setminus \fCN{N}(\Delta)} < \card{\fCN{N}(\beta) \setminus \fCN{N}(\Delta')}$, therefore 
    $\dpi{1}(E,D) > \dpi{1}(E,D')$ (satisfaction of the axiom Strict Increasing Flat Inference)

    \item if $\card{\{ \alpha : \Delta {\nci} \alpha \text{ and } \beta {\nci} \alpha \}} >  \card{\{ \alpha : \Delta' {\nci} \alpha \text{ and } \beta {\nci} \alpha \}}$, then $\card{\fCN{N}(\beta) \setminus \fCN{N}(\Delta)} < \card{\fCN{N}(\beta) \setminus \fCN{N}(\Delta')}$, therefore 
    $\dpi{1}(E,D) > \dpi{1}(E,D')$ (satisfaction of the axiom Strict Increasing Weighted Inference)
\end{itemize}

\end{proof}

\begin{proof}[\upshape{\textbf{Proof (Proposition \ref{prop:sat-min})}}]
\upshape{
Let $\bL = \tuple{\cW,{\nci},t}$ be a weighted logic, $N$ a normalization method on $\cW$, 
$p \in (0,1]$, and $E = \tuple{\Gamma,\alpha} \in \ent, D = \tuple{\Delta,\beta}, D' = \tuple{\Delta',\beta} \in \aArg$.  
Let us recall that from Definition \ref{def:cmin}, $\cinf_{\bL N}(\Delta,\beta) = \lb \Gamma : \Gamma \subseteq N(\Delta) \textrm{ and } \Flat(\Gamma) \vdash \Flat(\beta) \rb.$

($\cmpen{}$):
\begin{itemize}
    \item $\forall p \in (0,1]$, if $\cinf(\Delta,\beta) = \emptyset, \text{ then } \cmpen{p}(E,D) = 1$, otherwise: 
    if $\fa \Phi \subset \Delta, \Flat(\Phi) {\not\vdash} ~ \Flat(\beta)$, then $\cmpen{p}(E,D) = \mymax \big( 0, 1 - p \times \big( |\Delta| - \mymin \lb |\Gamma| : \Gamma \in \cinf(\Delta,\beta) \rb \big) \big) = \mymax \big( 0, 1 - p \times \big( |\Delta| - |\Delta| \big) \big) = 1$ (satisfaction of the axiom Ideal Flat Minimality);
    \item Since weighted inference implies flat inference, using the same reasoning, we also have $\cmpen{p}(E,D) = 1$ (satisfaction of the axiom Ideal Weighted Minimality);

    \item $\forall p \in (0,1]$, if $\cinf(\Delta',\beta) = \emptyset$, or $\Delta'$ is minimal to implies $\beta$, then in both cases $\cmpen{p}(E,D') = 1$, and so for any $\Delta$, $\cmpen{p}(E,D) \leq \cmpen{p}(E,D')$. 
    Additionally in the general cases, if $\card{\{\Gamma : \Gamma \subset \Delta \text{ s.t. } \Flat(\Gamma) {\vdash} ~ \Flat(\beta)\}} \geq  \card{\{\Gamma : \Gamma' \subset \Delta' \text{ s.t. } \Flat(\Gamma) {\vdash} ~ \Flat(\beta) \}}$, then $|\Delta| - \mymin \lb |\Gamma| : \Gamma \in \cinf(\Delta,\beta) \rb  \geq |\Delta'| - \mymin \lb |\Gamma| : \Gamma \in \cinf(\Delta',\beta) \rb$, i.e. $\cmpen{p}(E,D) \leq \cmpen{p}(E,D')$ (satisfaction of the axiom Lenient Decreasing Flat Minimality);

    \item Since weighted inference implies flat inference, using the same reasoning, we also have $\cmpen{p}(E,D) \leq \cmpen{p}(E,D')$ (satisfaction of the axiom Lenient Decreasing Weighted Minimality);
\end{itemize}

($\cmmin{}$):
\begin{itemize}
    \item If $\cinf(\Delta,\beta) = \emptyset$ (i.e., $\Delta = \emptyset$ or $\Flat(\Delta) {\not\vdash} \Flat(\beta)$), $\text{ then } \cmmin{}(E,D) = 1$, otherwise: 
    if $\fa \Phi \subset \Delta, \Flat(\Phi) {\not\vdash} ~ \Flat(\beta)$, and $\cinf(\Delta,\beta) \neq \emptyset$, then $\cmmin{}(E,D) = \dfrac{1}{\card{\cinf(\Delta,\beta)}} = \dfrac{1}{1} = 1$ (satisfaction of the axiom Ideal Flat Minimality);
    \item Since weighted inference implies flat inference, using the same reasoning, we also have $\cmmin{}(E,D) = 1$ (satisfaction of the axiom Ideal Weighted Minimality);

    \item If $\cinf(\Delta',\beta) = \emptyset$ or $\Delta'$ is minimal to implies $\beta$, then in both cases $\cmmin{}(E,D') = 1$, and so for any $\Delta$, $\cmmin{}(E,D) \leq \cmmin{}(E,D')$. 
    Additionally, if $\card{\{\Gamma : \Gamma \subset \Delta \text{ s.t. } \Flat(\Gamma) {\vdash} ~ \Flat(\beta)\}} \geq  \card{\{\Gamma : \Gamma' \subset \Delta' \text{ s.t. } \Flat(\Gamma) {\vdash} ~ \Flat(\beta) \}}$, then $\card{\cinf(\Delta,\beta)} \geq \card{\cinf(\Delta',\beta)}$, i.e., $\cmmin{}(E,D) = \dfrac{1}{\card{\cinf(\Delta,\beta)}} \leq \dfrac{1}{\card{\cinf(\Delta',\beta)}} = \cmmin{}(E,D')$ (satisfaction of the axiom Lenient Decreasing Flat Minimality);
    \item Since weighted inference implies flat inference, using the same reasoning, we also have $\cmmin{}(E,D) \leq \cmmin{}(E,D')$ (satisfaction of the axiom Lenient Decreasing Weighted Minimality);

    \item If $\cinf(\Delta',\beta) = \emptyset$ or $\Delta'$ is minimal to implies $\beta$, then in both cases $\cmmin{}(E,D') = 1$, and if $\card{\{\Gamma : \Gamma \subset \Delta \text{ s.t. } \Flat(\Gamma) {\vdash} ~ \Flat(\beta)\}} >  \card{\{\Gamma : \Gamma' \subset \Delta' \text{ s.t. } \Flat(\Gamma) {\vdash} ~ \Flat(\beta) \}}$ then $\cinf(\Delta,\beta) \neq \emptyset$ and $\Delta$ is not minimal to implies $\beta$ (otherwise the cardinality are both equal to 0 and so there is no $>$), i.e., $\card{\cinf(\Delta,\beta)} > 1$, thus $\cmmin{}(E,D) < 1$ ; hence $\cmmin{}(E,D) < \cmmin{}(E,D')$. 
    Additionally in the general cases, if $\card{\{\Gamma : \Gamma \subset \Delta \text{ s.t. } \Flat(\Gamma) {\vdash} ~ \Flat(\beta)\}} >  \card{\{\Gamma : \Gamma' \subset \Delta' \text{ s.t. } \Flat(\Gamma) {\vdash} ~ \Flat(\beta) \}}$, then $\card{\cinf(\Delta,\beta)} > \card{\cinf(\Delta',\beta)}$, i.e., $\cmmin{}(E,D) = \dfrac{1}{\card{\cinf(\Delta,\beta)}} < \dfrac{1}{\card{\cinf(\Delta',\beta)}} = \cmmin{}(E,D')$ (satisfaction of the axiom Strict Decreasing Flat Minimality);
    \item Since weighted inference implies flat inference, using the same reasoning, we also have $\cmmin{}(E,D) \leq \cmmin{}(E,D')$ (satisfaction of the axiom Strict Decreasing Weighted Minimality);
\end{itemize}

}
\end{proof}

\begin{proof}[\upshape{\textbf{Proof (Proposition \ref{prop:sat-sim})}}]
Let $\bL = \tuple{\cW,{\nci},t}$ be a weighted logic and $N$ a normalization method on $\cW$.
Let $x, y \in (0,+\infty)$ and $E = \tuple{\Gamma,\alpha} \in \ent, D = \tuple{\Delta,\beta}, D' = \tuple{\Delta',\beta} \in \aArg$.
From Definition~\ref{def:extension-tversky-measure}, we have:
$$\m{Tve}_{N}(\Gamma,\Delta, x, y) = 
 			\left\{
 			\begin{array}{l l}
 			1 & \textrm{if } \Gamma = \Delta = \emptyset;\\
 			\dfrac{a}{a + x \times  b + y \times c} & \textrm{otherwise,}\\
 			\end{array}
 			\right.$$
    where $a= \card{N(\Gamma) \cap N(\Delta)}$, $b = \card{N(\Gamma) \setminus N(\Delta)}$, and $c = \card{N(\Delta) \setminus N(\Gamma)}$; 
    $a'= \card{N(\Gamma) \cap N(\Delta')}$, \linebreak 
    $b = \card{N(\Gamma) \setminus N(\Delta')}$, and $c = \card{N(\Delta') \setminus N(\Gamma)}$.
    
It is then straightforward to check the following:
\begin{itemize}
    \item Lenient increasing N-similarity: $\text{if } a \geq a', b = b', c = c',\text{ then } \bM(E,D) \geq \bM(E,D').$ 

    \item Strict increasing N-similarity: $\text{if } a > a', b = b', c = c',\text{ then } \bM(E,D) > \bM(E,D').$ 

    \item Lenient decreasing N-similarity: $\text{if } a = a', b \geq b', c \geq c',\text{ then } \bM(E,D) \leq \bM(E,D').$ 

    \item Strict decreasing N-similarity: $\text{if } a = a'$, and $(b > b', c \geq c')$ or $(b \geq b', c > c')$, $\text{then } \bM(E,D) < \bM(E,D').$ 
\end{itemize}

\end{proof}

\begin{proof}[\upshape{\textbf{Proof (Proposition \ref{prop:sat-link})}}]
The case of $\cmbl{}$ is straightforward.
We turn to $\cmtvetve{\m{jac}}$, $\cmtvetve{\m{dic}}$, $\cmtvetve{\m{sor}}$, $\cmtvetve{\m{and}}$, and $\cmtvetve{\m{ss2}}$.
By Definition \ref{def:extension-tversky-measure}, if $a = 0$, then all Tversky measures are equal to $0$.
By Definition $\ref{def:preserv}$, thanks to the product between the two Tversky similarity measures, if the supports have no intersection (or the claims are different), then the complete Tversky preservation measure return $0$.
\end{proof}

\begin{proof}[\upshape{\textbf{Proof (Proposition \ref{prop:sat-cgran})}}]
Let $\bL = \tuple{\cW,{\nci},t}$ be a weighted logic and $N$ a normalization method on $\cW$.
Let $E = \tuple{\Gamma,\alpha} \in \ent, D = \tuple{\Delta,\beta}, D = \tuple{\Delta',\beta} \in \aArg$.
From Def. \ref{def:graCon}:
\begin{itemize}
    \item $\cmcd{}(E,D) > \cmcd{}(E,D')$ iff \\ $\card{N(\Delta) \setminus N(\Gamma)} < \card{N(\Delta') \setminus N(\Gamma)}$.

    \item If $s \in \mathbb{N}^+$, and $p \in (0,1]$, then: $\cmcp{}(E,D) \geq \cmcd{}(E,D')$ iff $\card{N(\Delta) \setminus N(\Gamma)} < \card{N(\Delta') \setminus N(\Gamma)}$.
\end{itemize}

\end{proof}

\begin{proof}[\upshape{\textbf{Proof (Proposition \ref{prop:sat-dgran})}}]
Let $\bL = \tuple{\cW,{\nci},t}$ be a weighted logic and $N$ a normalization method on $\cW$.
Let $E = \tuple{\Gamma,\alpha} \in \ent, D = \tuple{\Delta,\beta}, D' = \tuple{\Delta',\beta} \in \aArg$. 
From Def. \ref{def:graDet}:
\begin{itemize}
    \item $\cmdg{}(E,D) < \cmdg{}(E,D')$ iff \\ $\card{N(\Delta) \setminus N(\Gamma)} < \card{N(\Delta') \setminus N(\Gamma)}$.

    \item If $s \in \mathbb{N}^+$ and $p \in (0,1]$, then: $\cmpg{}(E,D) \leq \cmpg{}(E,D')$ iff $\card{N(\Delta) \setminus N(\Gamma)} < \card{N(\Delta') \setminus N(\Gamma)}$.
\end{itemize}
\end{proof}

\begin{proof}[\upshape{\textbf{Proof (Proposition \ref{prop:sat-wstab})}}]

Let $\bL = \tuple{\cW,{\nci},t}$ be a weighted logic, and  
$A \in \aArg, B_1 = \tuple{\Delta_1,\beta_1}, B_2 = \tuple{\Delta_2,\beta_2} \in \aArg$. 
Let $a, u \in [0,1]$ such that $a < u $, and $Err_i = \abso{\mymin[\Wei(\Delta_i)] - \Wei(\beta_i)}$. 
From Definition \ref{def:stabC}:
\begin{itemize}
    \item Ideal Stability: $\text{if } \mymin(\Wei(\Delta_1)) = \Wei(\beta_1)$ then  $\cmsd{}(E,D_1) =  1 - 0 = \cmld{}(E,D_1) = 1 $, because $Err_1 = 0 \leq a$, $\forall a \in [0,1]$.
    \item Lenient Decreasing Stability: $\text{if } Err_1 \leq Err_2$, then, by definition, $\cmsd{}(E,D_1) \geq \cmsd{}(E,D_2)$ and $\cmld{}(E,D_1) \geq \cmld{}(E,D_2)$ because either the value stops increasing and falls to $0$ if it reaches the maximum error, or the value stops decreasing if it reaches the error tolerance and otherwise it varies according to the difference;
    \item Strict Decreasing Stability: $\text{if } Err_1 <  Err_2$, then $\cmsd{}(E,D_1) > \cmsd{}(E,D_2),$ by definition (i.e., $1-Err_i$).
\end{itemize}

\end{proof}

\end{document}